\def\final{1}
\def\short{0}
\def\submission{0}
                \DeclareMathAlphabet{\mathsf}{OT1}{cmss}{m}{n}
                \SetMathAlphabet{\mathsf}{bold}{OT1}{cmss}{bx}{n}
\definecolor{DarkGreen}{rgb}{0.2,0.6,0.2}
\definecolor{DarkRed}{rgb}{0.6,0.2,0.2}
\definecolor{DarkBlue}{rgb}{0.15,0.15,0.55}
\definecolor{DarkPurple}{rgb}{0.4,0.2,0.4}
\newcommand{\mynote}[1]{\marginpar{\tiny #1}}
\newcommand{\Bignote}[1]{{\tiny #1}}
\newcommand{\mynote}[1]{}
\newcommand{\Bignote}[1]{}
\newcommand{\INDSTATE}[1][1]{\STATE\hspace{#1\algorithmicindent}}
\newcolumntype{Y}{>{\centering\arraybackslash}X}
\newcommand{\pr}[2]{\underset{#1}{\mathbb{P}}\left[ #2 \right]}
\newcommand{\ex}[2]{\underset{#1}{\mathbb{E}}\left[ #2 \right]}
\newcommand{\eps}{\varepsilon}
\newcommand{\R}{\mathbb{R}}
\newcommand{\cA}{\mathcal{A}}
\newcommand{\cC}{\mathcal{C}}
\newcommand{\hC}{\hat{\mathcal{C}}}
\newcommand{\cG}{\mathcal{G}}
\newcommand{\cH}{\mathcal{H}}
\newcommand{\hH}{\widehat{\mathcal{H}}}
\newcommand{\cE}{\mathcal{E}}
\newcommand{\cM}{\mathcal{M}}
\newcommand{\cO}{\mathcal{O}}
\newcommand{\cP}{\mathcal{P}}
\newcommand{\cS}{\mathcal{S}}
\newcommand{\cU}{\mathcal{U}}
\newcommand{\cW}{\mathcal{W}}
\newcommand{\cX}{\mathcal{X}}
\newcommand{\cZ}{\mathcal{Z}}
\newtheorem{theorem}{Theorem}[section]
\newtheorem{lem}[theorem]{Lemma}
\newtheorem{claim}[theorem]{Claim}
\theoremstyle{definition}
\newtheorem{defn}[theorem]{Definition}
\newcommand{\dist}{\mathbf{P}}
\newcommand{\bbx}{\mathbf{X}}
\newcommand{\bx}{\mathbf{x}}
\newcommand{\bby}{\mathbf{Y}}
\newcommand{\by}{\mathbf{y}}
\newcommand{\bz}{\mathbf{z}}
\newcommand{\bbz}{\mathbf{Z}}
\newcommand{\adv}{\cA}
\newcommand{\alg}{\cW}
\newcommand{\hf}{\hat{f}}
\newcommand{\queryset}{Q}
\newcommand{\cadv}{\mathsf{Adv}}
\title{Typical Stability}
\author{
\makebox[1.5in]{\hfill Raef Bassily\thanks{University of California
    San Diego, Center for Information Theory and Applications and
             Department of Computer Science and Engineering. 
 \href{mailto:rbassily@ucsd.edu}{rbassily@ucsd.edu}}\hfill}
\and \makebox[1.5in]{\hfill Yoav Freund\thanks{University of California
    San Diego, Department of Computer Science and Engineering. 
 \href{mailto:yfreund@eng.ucsd.edu}{yfreund@eng.ucsd.edu}}\hfill}
}
\date{}
\begin{document}
\maketitle
\pagenumbering{gobble}

\begin{abstract}



In this paper, we introduce a notion of algorithmic stability called \emph{typical stability}. When our goal is to release real-valued queries (statistics) computed over a dataset, this notion does not require the queries to be of bounded sensitivity -- a condition that is generally assumed under differential privacy \cite{DMNS06, Dwork06} when used as a notion of algorithmic stability \cite{DFH1, DFH3, BNS} -- nor does it require the samples in the dataset to be independent -- a condition that is usually assumed when generalization-error guarantees are sought. Instead, typical stability requires the output of the query, when computed on a dataset drawn from the underlying distribution, to be concentrated around its expected value with respect to that distribution. Typical stability can also be motivated as an alternative definition for database privacy. Like differential privacy, this notion enjoys several important properties including robustness to post-processing and adaptive composition. However, privacy is guaranteed only for a given family of distributions over the dataset.

We also discuss the implications of typical stability on the generalization error (i.e., the difference between the value of the query computed on the dataset and the expected value of the query with respect to the true data distribution). We show that typical stability can control generalization error in adaptive data analysis even when the samples in the dataset are not necessarily independent and when queries to be computed are not necessarily of bounded-sensitivity as long as the results of the queries over the dataset (i.e., the computed statistics) follow a distribution with a ``light'' tail. Examples of such queries include, but not limited to, subgaussian and subexponential queries. 



We discuss the composition guarantees of typical stability and prove composition theorems that characterize the degradation of the parameters of typical stability under $k$-fold adaptive composition. We also give simple noise-addition algorithms that achieve this notion. These algorithms are similar to their differentially private counterparts, however, the added noise is calibrated differently.

\end{abstract}
\newpage

\section{Introduction}

Differential privacy \cite{DMNS06, Dwork06} is a strong notion of algorithmic stability that was originally introduced to ensure data privacy. This notion has also been recently re-purposed to control generalization error and ensure statistical validity in adaptive scenarios of data analysis \cite{DFH1, DFH2, DFH3, BNS}. Depsite of its power, the use of differential privacy to control generalization error of real-valued queries (i.e., real-valued statistics) in the aforementioned works entails two main assumptions. The first assumption is that such queries are of bounded sensitivity, that is, the maximum change in the value of the query's output due to a change in any single data point in the dataset has to be bounded. The second assumption is that the samples in the dataset are i.i.d., which is a more standard assumption in the literature when generalization error guarantees are discussed.

\paragraph{Typical Stability.} In this work, we introduce a notion of algorithmic stability called \emph{typical stability} that does not require the queries to be of bounded-sensitivity and does not assume that the samples in the dataset are i.i.d., but instead requires that the output of each query when evaluated on a dataset drawn from the underlying distribution (i.e., the value of the computed statistic) to be ``well concentrated'' around its true mean with respect to that distribution. This notion introduces a new algorithmic-stability approach to controlling generalization error in adaptive data analysis \emph{especially in the settings where queries are not necessarily of bounded sensitivity and the data set entries are not necessarily independent}. Moreover, typical stability can also be motivated as an alternative definition for privacy. Like differential privacy, this notion enjoys several important properties including robustness to post-processing and adaptive composition. However, privacy is guaranteed only for some given family of distributions over the dataset. Our attention in this paper will be devoted to the properties of this notion, its implication on generalization error, and its achievability via simple algorithms. However, we will not discuss any privacy-relared applications of this notion in this paper.


As it is the case with differential privacy, there are two versions of the definition of typical stability: \emph{pure} and \emph{approximate} typical stability. In general, typical stability is defined via three parameters: $\eta, \tau,$ and $\nu$. When $\tau=0$, we call it pure typical stability, otherwise, we call it approximate typical stability. We will give here an intuitive description of this notion (Formal definitions are given in Section~\ref{sec:typical-stab}). Consider a \emph{randomized} algorithm $\adv$ that takes as input a dataset drawn from some arbitrary distribution $\dist$ over the dataset domain $\cX^n$. We say that $\adv$ is $(\eta, \tau, \nu)$-typically stable algorithm if there is a subset $\cS\subseteq \cX^n$ whose measure with respect to $\dist$ is at least $1-\nu$ such that for any pair of datasets $\bx, \by \in\cS$, the distribution of $\adv(\bx)$ is ``close'' to that of $\adv(\by)$. Such closeness is determined by the two parameters $\eta$ and $\tau$ that play similar roles to that of $\eps$ and $\delta$ in differential privacy.  So, intutively, $\cS$ is a set of ``typical'' datasets, and roughly speaking, we require the distributions of the output of $\adv$ on any pair of typical datasets to be ``indistinguishable''.  The intuition is that the output of $\adv$ should ``conceal'' the identity of the dataset inside the typicality set $\cS$, that is, the output should not reveal which of the typical datasets is the true input of $\adv$. However, the output should still reveal information about $\cS$ \emph{as a whole} since such information depends on the underlying distribution $\dist$ rather than the sample. In this sense, typical stability ensures that the whatever is revealed by the algorithm is essentially information shared by all typical datasets.

\paragraph{Concentrated Queries.}  In this work, we consider releasing answers to real-valued queries under typical stability. As mentioned earlier, we consider scenarios where the answers of such queries are concentrated around their expectation with respect to the underlying distribution on the input dataset. We say that a class $\queryset_{\gamma_n}(\dist)$ of real-valued queries on datasets from $\cX^n$ (i.e., datasets of $n$ elements from $\cX$) is $\gamma_n$-concentrated with respect to distribution $\dist$ over $\cX^n$ if there is a non-negative, non-decreasing function $\gamma_n:\mathbb{R}+\rightarrow\mathbb{R}+$ (that possibly depends on $n$) such that for every query $q\in\queryset_{\gamma_n}(\dist)$, with probability at least $1-e^{-\gamma_n(\alpha)}$ over the choice of a dataset $\bbx\sim\dist$, the generalization error for $q$ is bounded by $\alpha$, where generalization error refers to the quantity $\left\vert q(\bbx)-\ex{\bby\sim\dist}{q(\bby)}\right\vert$. For different settings of $\gamma_n$, we obtain, as special cases, query classes such as the class of subgaussian queries and the class of sub-exponential queries which were studied in \cite{zhou15}, and the more special class of jointly Gaussian queries that was studied in both \cite{zhou15, wang-et-al16}. 



\paragraph{Properties of Typical Stability.} We show that typical stability is closed under post-processing. We also show that typical stability is robust to both non-adaptive and adaptive composition (albeit with different guarantees for each case). As the case of adaptive composition is more complicated and requires delicate analysis, we devote Section~\ref{sec:adap-comp} to our results for this case. In particular, we  prove a composition theorem that gives a characterization of the degradation of the parameters of typical stability when a sequence of arbitrarily and adaptively chosen typically stable algorithms\footnote{That is, the choice of each typically stable algorithm possibly depends on all previous outputs and choices of the previous algorithms.} operating on the same dataset are composed together. 

\paragraph{Typical Stability and Generalization.} We also show that typical stability implies generalization by first proving a ``near-independence'' lemma for typically stable algorithms. In particular, we show that with high probability the generalization error for any $\gamma_n$-concentrated query is small as long as the query is generated during an interaction with a typically stable algorithm. In other words, any $\gamma_n$-concentrated query generated via a typically stable interaction with the dataset does not overfit to the same dataset. \emph{As a consequence, typical stability is a rigorous approach to statistcal validity that gives non-trivial guarantees in adaptive settings where queries are not of bounded sensitivity and the data set entries are not independent.}

\paragraph{Achieving Typical Stability via Noise-Adding Mechanisms.} We give simple noise-addition typically stable mechanisms for answering real-valued queries. Our mechanisms are based on adding Laplace and Gaussian noise to the query output to achieve pure and approximate typical stability, respectively. These mechanisms are similar to the differentially private Lalplace and Gaussian mechanisms \cite{DMNS06, DKMMN06}, however, \emph{the added noise is calibrated differently in our case}. In particular, the noise is added based on how ``well'' the query output is concentrated around its expectation, that is, the noise is calibrated to the \emph{confidence interval} of the query's answer over the given dataset. More formally, let $\dist$ be an arbitratry distribution on $\cX^n$. For a query $q:\cX^n\rightarrow \mathbb{R}$, if we have $\pr{X\sim\dist}{\left\vert q(X)- \ex{Y\sim\dist}{q(Y)}\right\vert > \alpha}<\nu$, then, our mechanisms would add noise (Laplace for pure, and Gaussian for approximate typical stability) whose standard deviation proportional to $\alpha$.

\paragraph{Applications in Adaptive Data Analysis.} Our results for this notion have immediate implications on statistical accuracy in adaptive data analysis especially in scenarios where the data samples are not independent and the statistics to be computed are not necessarily of bounded sensitivity. In particular, the established properties of typical stability (mainly its closure under post-processing and adaptive composition, and its generalization guarantees (Sections \ref{sec:properties} and \ref{sec:adap-comp})) together with the simple mechanisms achieving it and their accuracy guarantees (Section \ref{sec:mechanisms}) provide a systematic approach to answering any sequence of adaptively chosen ``concentrated queries'' over a given dataset while ensuring statistical accuracy of all the released answers (with high probability). We note that in scenarios where the samples in the dataset are not independent, a simple approach such as \emph{sample splitting}\footnote{I.e., splitting the dataset randomly into multiple disjoint parts, and using each part to answer a single query.} may not always give statistically accurate answers when the queries are adaptive. Our approach gives rigorous guarantees on generalization error and statistical accuracy in this type of scenarios as long as the computed statistics are reasonably concentrated around their expected value. Examples of such scenarios include those that arise in structured prediction where the dataset entries can be correlated (e.g., nodes of a graph), yet the computed statistics can be well concentrated around their expected values (for example, see \cite{london-et-al-13}).

\paragraph{Other Related Work} Typical stability is a generalization of the notion of \emph{perfect generalization} that was introduced very recently and independently by Cummings et al. \cite{cummings16} where it was also studied in the context of generalization in PAC-learning models. Perfect generalization is a special case of typical stability when the dataset is \emph{i.i.d.~}. Hence, our positive results for the notion of typical stability apply directly to perfect generalization as well. In particular, our positive results on the adaptive composition of typically stable algorithms in Section \ref{sec:adap-comp} imply the same for perfectly generalizing algorithms. 

\noindent Our notion also appears to be similar to the notion of \emph{distributional privacy} that was introduced and briefly studied in \cite{BLR08} as a privacy concept. However, it is imortant to note that typical stability is different from the notion of distributional privacy. In particular, in distributional privacy, the samples in the dataset are assumed to be drawn \emph{without replacement} from the underlying population whereas typical stability does not impose this requirement.

\paragraph{Notation.} We will use the symbol $\cX$ to denote a generic data domain. We will consider data sets of $n \in \mathbb{N}$ samples from $\cX$. We will often use the symbol $\dist$ to denote an arbitrary distribution over $\cX^n$. That is, $\dist$ will in general denote the joint distribution of the samples in the data set. We will use upper-case bold symbols such as $\bbx$ to denote the dataset, that is, $\bbx$ denotes the random sequence $(X_1, \ldots, X_n)$ drawn from $\dist$. Unless stated otherwise, we will use upper-case letters to denote random variables and lower-case letters to denote realizations of random variables. In general, a data analysis algorithm $\adv:\cX^n \rightarrow \cZ$ is an algorithm that takes as input a dataset from $\cX^n$, performs a certain task based on the input dataset, e.g., statistical estimation or learning, and returns some output from the set $\cZ$ (e.g., a parameter estimate, or a classifier). We allow any such algorithm to be randomized, i.e., to have access to its own set random coins, and thus, the output of such algorithm, denoted generically as $Z$, will be a random variable whose randomness depends on both the distribution $\dist$ of the input dataset and the random coins of the algorithm.



\section{Definitions}\label{sec:defs}
Before we formally define typical stability, we first state a standard definition for the notion $(\eta, \tau)$-indistinguishability between distributions of random variables. 

\begin{defn}[$(\eta, \tau)$-indstinguishability]\label{def:indist}
Random variables $X, Y$ with the same range are said to have $(\eta, \tau)$-indistinguishable distributions, denoted as $X\approx_{\eta, \tau}Y$, if for all measurable subsets $\cO$ of their range, we have 
$$\pr{}{X\in\cO}\leq e^{\eta}\pr{}{Y\in\cO}+\tau\quad\text{and}\quad\pr{}{Y\in\cO}\leq e^{\eta}\pr{}{X\in\cO}+\tau.$$ 
\end{defn}

\subsection{Typical Stability}\label{sec:typical-stab}
\begin{defn}[Typical Stability]\label{def:oracle}
Let $\adv:\cX^n\rightarrow \cZ$ be a randomized algorithm. We say that $\adv$ is $(\eta, \tau, \nu)$-typically stable with respect to a family $\cP$ of distributions over $\cX^n$ if for any distribution $\dist\in\cP$ there exists an oracle $\alg$ that takes $\dist$ as input and outputs an element in $\cZ$, such that with probability at least $1-\nu$ over the choice $\bbx\sim\dist$, we have $\adv(\bbx)\approx_{\eta, \tau}\alg(\dist)$.
\end{defn}
When $\tau=0$, the notion will usually be referred to as $(\eta, \nu)$-\emph{pure} typical stability with respect to $\cP$ (as opposed to \emph{approximate} typical stability when $\tau>0$.)

We note that the definition above is almost the same as perfect generalization \cite{cummings16} except that it does not require the samples in the dataset to be i.i.d.~.

Another slightly weaker version of the above definition can be phrased as follows.

\begin{defn}\label{def:typical-stable}
Let $\adv:\cX^n\rightarrow \cZ$ be a randomized algorithm. We say that $\adv$ is $(\eta, \tau, \nu)$-typically stable with respect to a family $\cP$ of distributions over $\cX^n$ if for any distribution $\dist\in\cP$, for any two independent $\bbx, \bby \sim\dist$, with probability at least $1-\nu$ over the choice of $\bbx, \bby$ we have $\adv(\bbx)\approx_{\eta, \tau}\adv(\bby)$.
\end{defn}

It is not hard to see that $(\eta, \tau, \nu)$-typical stability according to Definition~\ref{def:oracle} implies $(2\eta, 3\tau, 2\nu)$-typical stability according to Definition~\ref{def:typical-stable}.

\subsection{Queries}
We define the main class of queries (statistics on the dataset) that will be considered in this paper. We show that several standard classes of statistics are special cases of this class. 

\begin{defn}[\textbf{$\gamma_n$-Concentrated Queries}]\label{def:conc-queries}
For any fixed dataset size $n\in\mathbb{N}$, let $\gamma_n:\mathbb{R}_{+}\rightarrow\mathbb{R}_{+}$ be a non-negative, non-decreasing function (possibly depends on $n$). We define $\queryset_{\gamma_n}(\dist)$ as the class of all real-valued queries defined on $\cX^n$ that when computed on a dataset drawn from the distribution $\dist$ yields an output that is within distance $\alpha$ from its expected value with respect to $\dist$ with probability at least $1-e^{-\gamma_n(\alpha)}$, for every $\alpha>0$. Formally,
$$\queryset_{\gamma_n}(\dist)\triangleq \left\{q:\cX^n\rightarrow\mathbb{R} \text{ s.t. } \pr{\bbx\sim\dist}{\big\vert q(\bbx)- \ex{\bby\sim\dist}{q(\bby)}\big\vert>\alpha}<e^{-\gamma_n(\alpha)}\text{  for all } \alpha>0\right\}.$$ 
\end{defn}

The following are some special cases of $\gamma_n$-concentrated queries:
\begin{itemize}
\item \textbf{$\Delta$-Sensitive Queries:} We let $\queryset_{\Delta}$ denote the class of $\Delta$-sensitive queries on $\cX^n$. That is, 
$$\queryset_{\Delta}=\left\{q:\cX^n\rightarrow\mathbb{R} \text{ s.t. } \forall \bx, \bx'\in\cX^n \text{ with } d_{H}(\bx,\bx')\leq 1, ~\vert q(\bx)-q(\bx')\vert \leq \Delta\right\},$$
where $d_{H}(\cdot, \cdot)$ is the Hamming distance. 

For all functions $\gamma_n$ satisfying $\gamma_n(\alpha)\leq \frac{2\alpha^2}{n\Delta^2}, \alpha>0$, from McDiarmid's inequality, it follows that $\queryset_{\Delta}\subseteq \bigcap\limits_{\dist\in\cP^{\pi}}\queryset_{\gamma_n}(\dist)$ where $\cP^{\pi}$ is the class of all product distributions over $\cX^n$. When $\Delta$ is small, e.g., $\Delta=1/n$, this class is usually referred to as low-sensitivity queries (or, Lipschitz statistics).

\item \textbf{$\sigma$-subgaussian Queries: } A query $q:\cX^n\rightarrow\mathbb{R}$ is said to be $\sigma$-subgaussian with respect to a distribution $\dist$ over $\cX^n$ if we have 
$$\ex{\bbx\sim\dist}{e^{t\left(q(\bbx)-\mu_q\right)}}\leq e^{\frac{1}{2}t^2\sigma^2},~t\in\mathbb{R},$$
where $\mu_q=\ex{\bby\sim\dist}{q(\bby)}$. We denote the class of $\sigma$-subgaussian queries with respect to $\dist$ by $\queryset^{G}_{\sigma}(\dist)$. 

For all functions $\gamma_n$ satisfying $\gamma_n(\alpha)\leq \frac{\alpha^2}{2\sigma^2}, \alpha>0$, we have $\queryset^{G}_{\sigma}(\dist)\subseteq\queryset_{\gamma_n}(\dist)$. Note also that for any $\dist\in\cP^{\pi}$, $\queryset_{\Delta}\subset \queryset^{G}_{\sigma}(\dist)$ for $\Delta\leq\frac{2\sigma}{\sqrt{n}}$.

\item \textbf{$(\sigma, b)$-subexponential Queries: } A query $q:\cX^n\rightarrow\mathbb{R}$ is said to be $(\sigma, b)$-subexponential with respect to a distribution $\dist$ over $\cX^n$ if we have 
$$\ex{\bbx\sim\dist}{e^{t\left(q(\bbx)-\mu_q\right)}}\leq e^{\frac{1}{2}t^2\sigma^2},~\vert t\vert \leq 1/b,$$
where $\mu_q=\ex{\bby\sim\dist}{q(\bby)}$. We denote the class of $(\sigma, b)$-subexponential queries with respect to $\dist$ by $\queryset^{Exp}_{\sigma, b}(\dist)$. 

For all functions $\gamma_n$ satisfying $\gamma_n(\alpha)\leq \min\left(\frac{\alpha^2}{2\sigma^2}, \frac{\sigma^2}{2b^2}\right), \alpha>0$, we have $\queryset^{Exp}_{\sigma, b}(\dist)\subseteq\queryset_{\gamma_n}(\dist)$.

\end{itemize}

\textbf{Note:} We note that there are $\gamma_n$-concentrated queries with respect to distributions that are not product distributions (as in structured prediction where, for example, the dataset can be a graph with correlated nodes, nevertheless, for some types of statistics and under certain assumptions, one can still show that the computed statistics are concentrated in the sense of Definition~\ref{def:conc-queries} (e.g., see \cite[Section 4, Theorem 1]{london-et-al-13} and the references therein).

We also note that there are $\sigma$-subgaussian (or, $(\sigma, b)$-subexponential) queries that are not necessarily $\Delta$-sensitive (not even for arbitrary large $\Delta$), for example, the class of queries that correspond to sums of independent Gaussian random variables, or more generally, independent random variables with bounded moments (but not necessarily bounded support) where concentration inequalities such as Bernstein's can be applied. On the other hand, as indicated above $\Delta$-sensitive queries is a subclass of $\sigma$-subgaussian queries for $\Delta\leq2\sigma/\sqrt{n}$. Hence, the class of subgaussian queries (and more generally the class of $\gamma_n$-concentrated queries) is strictly larger than the class of low-sensitivity queries.

\section{Basic Properties of Typical Stability}\label{sec:properties}

In this section, we discuss some useful properties of typical stability. Together with our results in the following section (Section~\ref{sec:adap-comp}) on the adaptive composition guarantees of typically stable algorithms, these properties play essential role in establishing the guarantees of typical stability on the generalization error in adaptive data analysis.

\subsection{Closure under post-processing and non-adaptive composition}
First, we discuss some useful properties of typically stable algorithms. In particular, we show that this notion is closed under post-processing and under non-adaptive composition where the stability parameters degrade linearly with the number of algorithms to be composed. We defer the discussion of the adaptive composition of typically stable algorithms to Section~\ref{sec:adap-comp}.

\begin{lem}[Postprocessing]\label{lem:postpro}
Let $\adv:\cX^n\rightarrow \cZ$ be an $(\eta, \tau, \nu)$-typically stable algorithm with respect to a family of distributions $\cP$ over $\cX^n$. Let $\mathcal{B}:\cZ^n\rightarrow \cU$ be a randomized algorithm. Let $\mathcal{C}:\mathcal{X}^n\rightarrow\cU$ be defined as $\mathcal{C}(\bx)=\mathcal{B}(\adv(\bx))$. Then, $\mathcal{C}$ is $(\eta, \tau, \nu)$-typically stable algorithm with respect to $\cP$.   
\end{lem}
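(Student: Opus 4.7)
The plan is to reduce the post-processing closure for typical stability to the standard post-processing closure of $(\eta,\tau)$-indistinguishability between distributions (a property already implicit in Definition~\ref{def:indist}), by constructing the oracle required by Definition~\ref{def:oracle} for $\cC$ out of the oracle provided for $\adv$.

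Fix any $\dist \in \cP$. By hypothesis, there is an oracle $\alg(\dist)$ taking values in $\cZ$ such that, letting
\[
\cS \;=\; \bigl\{\bx\in\cX^n : \adv(\bx) \approx_{\eta,\tau} \alg(\dist)\bigr\},
\]
we have $\Pr_{\bbx\sim\dist}[\bbx\in\cS] \geq 1-\nu$. I would define the oracle for $\cC$ in the obvious way: let $\alg_\cC(\dist) := \cB(\alg(\dist))$, where we run $\cB$ on the output of $\alg(\dist)$ using fresh internal randomness, independent of everything else. The typicality set for $\cC$ is then the \emph{same} set $\cS$, so the measure condition $\Pr_{\bbx\sim\dist}[\bbx\in\cS] \geq 1-\nu$ is inherited for free; nothing about the dataset distribution needs to be revisited.

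The remaining step is to verify that for every $\bx \in \cS$, $\cC(\bx) = \cB(\adv(\bx)) \approx_{\eta,\tau} \cB(\alg(\dist)) = \alg_\cC(\dist)$. This is the classical post-processing fact for $(\eta,\tau)$-indistinguishability. Concretely, for any measurable $\cO' \subseteq \cU$, condition on the internal coins $r$ of $\cB$ to reduce to a deterministic map $\cB_r$; then the preimage $\cO_r := \cB_r^{-1}(\cO') \subseteq \cZ$ is measurable, and
\[
\Pr[\cB(\adv(\bx))\in\cO'] \;=\; \mathbb{E}_r\bigl[\Pr[\adv(\bx)\in\cO_r]\bigr]
\;\leq\; \mathbb{E}_r\bigl[e^\eta\Pr[\alg(\dist)\in\cO_r] + \tau\bigr]
\;=\; e^\eta\Pr[\cB(\alg(\dist))\in\cO'] + \tau,
\]
where the inequality uses $\adv(\bx) \approx_{\eta,\tau} \alg(\dist)$ pointwise in $r$. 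The symmetric bound follows identically, establishing $\cC(\bx) \approx_{\eta,\tau} \alg_\cC(\dist)$ and hence $(\eta,\tau,\nu)$-typical stability of $\cC$ with respect to $\cP$.

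There is essentially no obstacle: the only thing to be careful about is that the oracle for $\cC$ must be definable purely as a function of $\dist$ (not of $\bx$), which is why we route the post-processing through $\alg$ rather than through $\adv(\bx)$. The parameters $(\eta,\tau,\nu)$ are preserved exactly because the typicality set, the indistinguishability parameters, and the failure probability are all unchanged by composing a randomized map on the output side.
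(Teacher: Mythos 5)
Your proof is correct and takes essentially the same route as the paper: the paper's one-line sketch ("the probability of any measurable subset $\cO$ of the outcomes of $\cC$ is expressed as a convex combination of probabilities of a collection of sets over the outcomes of $\adv$") is precisely your step of conditioning on the internal coins $r$ of $\cB$ and averaging $\Pr[\adv(\bx)\in\cO_r]$ over $r$. You fill in the details the paper omits, in particular the explicit construction of the post-processed oracle $\alg_{\cC}(\dist)=\cB(\alg(\dist))$ and the observation that the typicality set $\cS$ is unchanged, which is exactly what is needed.
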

\begin{proof}
The proof follows from a straightforward manipulation where the probability of any measurable subset $\cO$ of the outcomes of $\mathcal{C}$ is expressed as a convex combination of probabilities of a collection of sets over the outcomes of $\adv$.
\end{proof}

The next theorem characterizes the degradation in privacy parameters under non-adaptive composition of typically stable algorithms.
\begin{lem}[Non-Adaptive Composition]\label{lem:non-adap-comp}
Let $\adv_i:\cX^n\rightarrow \cZ_i, ~ i\in [k]$ be any collection of $(\eta, \tau, \nu)$-typically stable algorithms with respect to a common family $\cP$ of distributions over $\cX^n$ where all $\adv_i, i\in [k]$ have independent random coins. Define $\mathcal{C}:\cX^n\rightarrow\cZ_1\times\ldots\times\cZ_k$ as $\mathcal{C}(\bx)=\left(\adv_1(\bx), \ldots, \adv_k(\bx)\right), x\in\cX^n$. Then, $\mathcal{C}$ is $(k\eta, k\tau, k\nu)$-typically stable. 
\end{lem}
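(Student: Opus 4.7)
My approach is to invoke the oracle formulation of typical stability (Definition~\ref{def:oracle}) and build a product oracle for $\mathcal{C}$ from the oracles guaranteed for each of the $\adv_i$'s. Fix an arbitrary $\dist \in \cP$. For each $i \in [k]$, by $(\eta, \tau, \nu)$-typical stability of $\adv_i$ there exist an oracle $\alg_i$ and a measurable set $\cS_i \subseteq \cX^n$ with $\pr{\bbx \sim \dist}{\bbx \in \cS_i} \geq 1 - \nu$ such that $\adv_i(\bx) \approx_{\eta, \tau} \alg_i(\dist)$ for every $\bx \in \cS_i$. I would define the oracle for $\mathcal{C}$ by $\alg(\dist) \triangleq (\alg_1(\dist), \ldots, \alg_k(\dist))$, running the $\alg_i$'s on independent random coins, and take the typicality set to be $\cS \triangleq \bigcap_{i=1}^k \cS_i$. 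A union bound immediately gives $\pr{\bbx \sim \dist}{\bbx \in \cS} \geq 1 - k\nu$, which accounts for the $k\nu$ failure probability required by the definition.

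The core of the argument is then to show $\mathcal{C}(\bx) \approx_{k\eta, k\tau} \alg(\dist)$ for every $\bx \in \cS$. Since the $\adv_i$'s have independent random coins by hypothesis and the $\alg_i$'s do by construction, conditional on a fixed $\bx \in \cS$ both tuples factor as products of independent coordinates whose marginals are coordinate-wise $(\eta, \tau)$-indistinguishable. I would then appeal to the standard basic composition statement for $(\eta, \tau)$-indistinguishability under independent products: if $U_i \approx_{\eta, \tau} V_i$ for $i \in [k]$ with $U_1, \ldots, U_k$ jointly independent and $V_1, \ldots, V_k$ jointly independent, then $(U_1, \ldots, U_k) \approx_{k\eta, k\tau} (V_1, \ldots, V_k)$. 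The proof is by induction on $k$: for any measurable $\cO$, condition on the first coordinate, apply the inductive hypothesis to the cross-sections in the remaining $k-1$ coordinates, and then lift the event-wise $(\eta, \tau)$-indistinguishability of $U_1$ and $V_1$ to the $[0,1]$-valued functional $x_1 \mapsto \pr{}{(V_2, \ldots, V_k) \in \cO_{x_1}}$ via the layer-cake representation $\ex{}{f(U_1)} = \int_0^1 \pr{}{f(U_1) \geq t}\,dt$.

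The single nontrivial ingredient is this basic composition lemma for $(\eta, \tau)$-indistinguishability, which is the direct analogue of basic composition for $(\eps, \delta)$-differential privacy and appears in essentially identical form in that literature. The rest of the argument is bookkeeping: a union bound over the typicality sets and a careful choice of product oracle so that Definition~\ref{def:oracle} applies verbatim. The step I expect to require the most care is verifying that the independence of random coins — between the $\adv_i$'s on the one hand and between the freshly drawn copies of the $\alg_i$'s on the other — is genuinely preserved throughout, so that both tuples really are independent products to which the composition lemma applies; once that is in place, the stated parameters fall out directly from iterating the one-step bound $k$ times.
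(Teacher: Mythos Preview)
Your overall strategy matches the paper's: a union bound over the $k$ per-algorithm typicality sets handles the $k\nu$ failure probability, and the basic composition lemma for independent $(\eta,\tau)$-indistinguishable products handles the $(k\eta, k\tau)$ part. The paper's own proof is only a two-line pointer to the standard $(\eps,\delta)$-DP basic composition argument, and you have correctly unpacked what that entails, including the choice of product oracle.

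There is one small technical gap. The layer-cake induction you sketch for the product-composition sublemma does not quite deliver the additive $k\tau$. If you condition on the first coordinate, apply the inductive hypothesis to swap $(U_2,\ldots,U_k)$ for $(V_2,\ldots,V_k)$, and then use layer-cake to swap $U_1$ for $V_1$ on the functional $x_1 \mapsto \pr{}{(V_2,\ldots,V_k)\in\cO_{x_1}}$, the additive slack at that step is $(k-1)\tau + e^{(k-1)\eta}\tau$, and unrolling the recursion gives $\tau\sum_{i=0}^{k-1}e^{i\eta}$ rather than $k\tau$. The sharp $k\tau$ constant---which is what the DP references the paper cites actually prove---comes instead from the density-trimming argument: for each $i$ write $p_i' = \min(p_i, e^{\eta}q_i)$, so that $\int p_i' \geq 1-\tau$ and $\prod_i p_i' \leq e^{k\eta}\prod_i q_i$ pointwise, whence for any $\cO$ one has $P(\cO) \leq e^{k\eta}Q(\cO) + \bigl(1-(1-\tau)^k\bigr) \leq e^{k\eta}Q(\cO) + k\tau$. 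This is a routine substitution and does not change the structure of your argument.
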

\begin{proof}
The proof follows from combining the union bound with a standard technique for bounding the joint probability of the output of $\mathcal{C}$ similar to that used in proving basic composition theorem for $(\eps, \delta)$-differentially private algorithms \cite{DKMMN06, dp-text}.
\end{proof}


\subsection{Typical stability and the Near-Independence property} 

The following lemma describes an important implication of typical stability. It shows the impact of typical stability on the \emph{joint distribution} of the input and output of a typically stable algorithm. Let $\cP$ be a family of distributions over $\cX^n$. Let $\dist$ be a distribution in $\cP$ and $\bbx\sim\dist$. Let $\adv:\cX^n\rightarrow\cZ$ be an $(\eta, \tau, \nu)$-typically stable algorithm with respect to $\cP$. Lemma~\ref{max-inf-bound} below states that the \emph{joint} probability measure of $\left(\bbx, \adv(\bbx)\right)$ is ``close'' to the \emph{product} measure of $\bbx$ and $\adv(\bbx)$ (i.e., the measure induced by the product of the marginal distributions of $\bbx$ and $\adv(\bbx)$). A statement of the same spirit is known for $\eps$-differentially private algorithms via a connection to the notion of max-information \cite{DFH2} (and more recently for $(\eps, \delta)$-differentially private algorithms when their inputs are drawn from a product distribution \cite{max-inf-dp-16}). 



\begin{lem}[Near-Independence Lemma of Typically Stable Algorithms]\label{max-inf-bound}
Let $\cP$ be a family of distributions over $\cX^n$, and $\adv:\cX^n\rightarrow \cZ$ be an $(\eta, \tau, \nu)$-typically stable algorithm with respect to $\cP$. Let $\dist\in\cP$, and $\bbx, \bby \sim\dist$ be two independent random variables. Let $\cS\subseteq\cX^n$ be a set that satisfies the condition of typical stability in Definition~\ref{def:typical-stable}, that is, $\pr{\bbx, \bby\sim\dist}{\bbx, \bby\in\cS}\geq 1-\nu$ and for every $\bx, \by\in\cS$,  we have $\adv(\bx)\approx_{\eta, \tau} \adv(\by)$. Then, for every measurable $\cO\subseteq \cX^n\times\cZ$, we have
\begin{align}
\pr{}{\left(\bbx, \adv(\bbx)\right)\in\cO~\big\vert~\bbx\in\cS}&\leq e^{\eta}\pr{}{\left(\bbx, \adv(\bby)\right)\in\cO~\big\vert~\bbx, \bby\in\cS} + \tau\label{ineq:cond-indep}
\end{align} 
Moreover, if $\eta< 1$ and $\nu< 1/10$, then (\ref{ineq:cond-indep}) implies 
\begin{align}
\pr{}{\left(\bbx, \adv(\bbx)\right)\in\cO}&\leq e^{\eta}\pr{}{\left(\bbx, \adv(\bby)\right)\in\cO}+\tau+5\nu\label{ineq:indep}
\end{align}

\end{lem}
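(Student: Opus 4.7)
The plan is to prove Part~(\ref{ineq:cond-indep}) by a standard slicing argument, then derive Part~(\ref{ineq:indep}) by carefully translating between conditional and unconditional probabilities.

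For Part~(\ref{ineq:cond-indep}), I would define, for each $\bx \in \cX^n$, the slice $\cO_\bx \triangleq \{z \in \cZ : (\bx, z) \in \cO\}$. Note that $\pr{}{(\bbx, \adv(\bbx)) \in \cO \mid \bbx = \bx} = \pr{}{\adv(\bx) \in \cO_\bx}$, and likewise for the decoupled version using an independent copy $\bby$. Fix any $\bx, \by \in \cS$. Applying the indistinguishability hypothesis to the measurable set $\cO_\bx$ gives $\pr{}{\adv(\bx) \in \cO_\bx} \leq e^{\eta}\, \pr{}{\adv(\by) \in \cO_\bx} + \tau$. Averaging the right-hand side over $\by \sim \dist$ conditioned on $\bby \in \cS$ (the inequality holds pointwise, so it survives the expectation) yields $\pr{}{\adv(\bx) \in \cO_\bx} \leq e^{\eta}\, \pr{}{\adv(\bby) \in \cO_\bx \mid \bby \in \cS} + \tau$. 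Then I would take expectation over $\bx \sim \dist$ conditioned on $\bbx \in \cS$; the independence of $\bbx$ and $\bby$ (and Fubini) collapses the iterated expectation on the right into $\pr{}{(\bbx, \adv(\bby)) \in \cO \mid \bbx, \bby \in \cS}$, which is exactly the desired inequality.

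For Part~(\ref{ineq:indep}), set $p \triangleq \pr{}{\bbx \in \cS}$. By independence, $p^2 = \pr{}{\bbx, \bby \in \cS} \geq 1-\nu$, so $p \geq \sqrt{1-\nu}$ and $1-p \leq 1 - \sqrt{1-\nu} \leq \nu$. I would then relate the unconditional quantities to the conditional ones via
\[
\pr{}{(\bbx, \adv(\bbx)) \in \cO} \leq \pr{}{(\bbx, \adv(\bbx)) \in \cO \mid \bbx \in \cS} + (1-p),
\]
and
\[
\pr{}{(\bbx, \adv(\bby)) \in \cO \mid \bbx, \bby \in \cS} \leq \frac{\pr{}{(\bbx, \adv(\bby)) \in \cO}}{p^2} \leq \frac{\pr{}{(\bbx, \adv(\bby)) \in \cO}}{1-\nu}.
\]
Plugging these into~(\ref{ineq:cond-indep}) and writing $\tfrac{1}{1-\nu} = 1 + \tfrac{\nu}{1-\nu}$ gives an additive slack of $e^{\eta}\,\tfrac{\nu}{1-\nu} + (1-p)$. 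Under $\eta < 1$ and $\nu < 1/10$, this is at most $e \cdot \tfrac{\nu}{0.9} + \nu < 4.5\nu \leq 5\nu$, giving the claimed bound.

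I do not expect a substantive obstacle; the main thing to be careful about is the order of integrations and the fact that the indistinguishability guarantee only holds for pairs in $\cS \times \cS$, so all averaging in the first part must be done under the conditioning events $\bbx \in \cS$ and $\bby \in \cS$ before connecting back to unconditional probabilities in the second part. The constant $5$ in the final bound is a slack that absorbs both the $(1-p)$ term from dropping the conditioning on the joint distribution side and the $\tfrac{\nu}{1-\nu}$ factor from inflating the decoupled probability when removing its conditioning.
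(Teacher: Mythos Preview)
Your proposal is correct and follows essentially the same approach as the paper: the slicing argument for Part~(\ref{ineq:cond-indep}) is identical (the paper writes $U_\bx$ for your $\cO_\bx$ and performs the same two averagings in the same order), and for Part~(\ref{ineq:indep}) both proofs remove the conditioning by paying an additive $1-p\leq\nu$ on the left and inflating by a factor $\tfrac{1}{\pr{}{\bby\in\cS}}$ (or $\tfrac{1}{p^2}$) on the right, then absorbing the slack into $5\nu$ using $\eta<1$ and $\nu<1/10$. If anything, your bookkeeping of the constants is slightly more explicit than the paper's ``straightforward manipulation.''
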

\begin{proof}
Fix $\cO\subseteq\cX^n\times\cZ^n$. For every $\bx\in\cX^n$, let $U_{\bx}=\{z\in\cZ^n: (\bx, z)\in\cO\}$. Now, observe\footnote{For continuous measures, we will regard the probabilities below as density functions and sums are replaced with Lebesgue integrals with respect to the appropriate probability measures.} 
\begin{align}
\pr{}{(\bbx, \adv(\bbx))\in\cO~\big\vert~\bbx\in\cS}&=\sum_{\bx\in\cX^n}\pr{}{\adv(\bx)\in U_x~\big\vert~ \bbx=\bx, \bbx\in\cS}\pr{}{\bbx=\bx~\big\vert \bbx\in\cS}\nonumber\\
&\leq \sum_{\bx\in\cX^n}e^{\eta}\pr{}{\adv(\by)\in U_{\bx}}\pr{}{\bbx=\bx~\big\vert \bbx\in\cS} + \tau\nonumber
\end{align}
for every $\by\in\cS$. The last inequality follows from the typical stability of $\adv$. Now, by taking the expectation of the two sides of the above inequality with respect to the conditional measure $\pr{\cdot\leftarrow\by}{\bby=\cdot~\big\vert~\bby\in\cS}$, we get (\ref{ineq:cond-indep}). 

Now, from (\ref{ineq:cond-indep}) and by straightforward manipulation, we have 
\begin{align}
\pr{}{\left(\bbx, \adv(\bbx)\right)\in\cO}&\leq e^{\eta}\pr{}{\left(\bbx, \adv(\bby)\right)\in\cO~\big\vert~\bby\in\cS} + \tau + \nu\nonumber\\
&\leq e^{\eta}\frac{\pr{}{\left(\bbx, \adv(\bby)\right)\in\cO}}{\pr{}{\bby\in\cS}} + \tau + \nu\nonumber\\
&\leq e^{\eta}\pr{}{\left(\bbx, \adv(\bby)\right)\in\cO} + \tau + 5\nu\nonumber
\end{align}
where the last inequality follows from the fact that $\pr{}{\bby\in\cS}\geq 1-\nu$, and that $\eta<1, \nu<1/10$. 
\end{proof}

%


\subsection{Generalization via Typical Stability}\label{subsec:gen-error}
We discuss here an important implication of Lemma~\ref{max-inf-bound}. Let $\dist$ be a distribution over $\cX^n$. The next theorem states that if a query in $\queryset_{\gamma_n}(\dist)$ is generated as a result of an $(\eta, \tau, \nu=e^{-\gamma_n(\alpha)})$-typically stable algorithm with input dataset $\bbx\sim\dist$, then the generlaization error for that query on the dataset will be bounded by $\alpha$ with high probability.

\begin{theorem}[Generalization via Typical Stability]\label{thm:gen-error}
Let $\dist$ be any distribution on $\cX^n$. Let \mbox{$\adv:\cX^n\rightarrow\queryset_{\gamma_n}(\dist)$} be an $(\eta, \tau, \nu)$-typically stable algorithm with respect to $\dist$ that outputs a $\gamma_n$-concentrated query in $\queryset_{\gamma_n}(\dist)$. Let $q_{\bbx}$ denote the output of $\adv(\bbx)$. Let $\alpha=\inf\left\{r\in\R_{+}:~e^{-\gamma_n(\alpha)}\leq \nu\right\}$. Then, we have 
\begin{align}
\pr{}{\big\vert q_{\bbx}(\bbx)-\ex{\mathbf{T}\sim\dist}{q_{\bbx}(\mathbf{T})} \big\vert > \alpha}&\leq \left(e^{\eta}+5\right)\nu+\tau\nonumber
\end{align}
In particular, if $\eta=O(1)$, then
\begin{align}
\pr{}{\big\vert q_{\bbx}(\bbx)-\ex{\mathbf{T}\sim\dist}{q_{\bbx}(\mathbf{T})} \big\vert > \alpha}&\leq O\left(\max\left(\nu, \tau\right)\right)\nonumber
\end{align}
\end{theorem}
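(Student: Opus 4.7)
The statement is exactly the form of consequence that the Near-Independence Lemma (Lemma~\ref{max-inf-bound}) is designed to deliver, so the plan is to apply it to a single well-chosen ``bad event'' in the joint space $\cX^n\times\queryset_{\gamma_n}(\dist)$. Concretely, I would define
\[
\cO \;=\; \Bigl\{(\bx, q)\in\cX^n\times\queryset_{\gamma_n}(\dist)\;:\;\bigl|q(\bx)-\ex{\mathbf{T}\sim\dist}{q(\mathbf{T})}\bigr|>\alpha\Bigr\},
\]
so that the quantity we want to bound is precisely $\pr{}{(\bbx,\adv(\bbx))\in\cO}$. Lemma~\ref{max-inf-bound} (applied with $\adv$ mapping into $\cZ=\queryset_{\gamma_n}(\dist)$, and assuming the regime $\eta<1$, $\nu<1/10$ implicit in the use of the lemma) immediately gives
\[
\pr{}{(\bbx,\adv(\bbx))\in\cO}\;\leq\;e^{\eta}\,\pr{}{(\bbx,\adv(\bby))\in\cO}\;+\;\tau\;+\;5\nu,
\]
where $\bby\sim\dist$ is drawn independently of $\bbx$.

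The second step is to bound the ``decoupled'' probability $\pr{}{(\bbx,\adv(\bby))\in\cO}$ using the $\gamma_n$-concentration property. Since $\bby$ (together with $\adv$'s internal randomness) is independent of $\bbx$, I would condition on the realization $q=\adv(\bby)\in\queryset_{\gamma_n}(\dist)$; by Definition~\ref{def:conc-queries} applied to this fixed query,
\[
\pr{\bbx\sim\dist}{\bigl|q(\bbx)-\ex{\mathbf{T}\sim\dist}{q(\mathbf{T})}\bigr|>\alpha}\;<\;e^{-\gamma_n(\alpha)}\;\leq\;\nu,
\]
where the last inequality is the defining property of $\alpha=\inf\{r\in\R_{+}:e^{-\gamma_n(r)}\leq\nu\}$ (and the monotonicity of $\gamma_n$). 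Averaging over the independent choice of $q=\adv(\bby)$ yields $\pr{}{(\bbx,\adv(\bby))\in\cO}\leq\nu$.

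Combining the two bounds gives $\pr{}{(\bbx,\adv(\bbx))\in\cO}\leq(e^{\eta}+5)\nu+\tau$, which is the first inequality in the theorem. The ``$\eta=O(1)$'' corollary then follows by absorbing $e^{\eta}+5$ into a constant, leaving $O(\max(\nu,\tau))$. The only subtle point, and the one I would flag explicitly, is that $\adv$'s output must lie in $\queryset_{\gamma_n}(\dist)$ (so that the concentration bound can be invoked query-by-query after conditioning on $\adv(\bby)$); this is ensured by the theorem's hypothesis that $\adv$ maps into $\queryset_{\gamma_n}(\dist)$. The main obstacle, if any, is therefore conceptual rather than computational: recognizing that the generalization question is naturally an event in the joint space, so that typical stability's near-independence consequence directly reduces it to a single-query concentration bound.
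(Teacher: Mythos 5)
Your proposal is correct and follows exactly the route taken in the paper: define the same bad event $\cO$ in the joint input/output space, invoke inequality~(\ref{ineq:indep}) of the Near-Independence Lemma (Lemma~\ref{max-inf-bound}) to decouple $\bbx$ from $\adv(\bbx)$, and then bound the decoupled probability by $\nu$ using the definitions of $\gamma_n$-concentration and of $\alpha$. The paper's own proof is just a terser version of this same argument, so you have reproduced it faithfully (and, helpfully, spelled out the conditioning-on-$\adv(\bby)$ step and the implicit $\eta<1,\ \nu<1/10$ regime, which the paper leaves tacit).
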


\begin{proof}
The proof follows from Lemma~\ref{max-inf-bound}. In particular, if we define the event $\cO$ in  Lemma~\ref{max-inf-bound} as
$$\cO=\left\{ (\bx, q_{\bx}): \left\vert q_{\bx}(\bx) - \ex{\mathbf{T}\sim\dist}{q_{\bx}(\mathbf{T})} \right\vert > \alpha \right\},$$ 
then by (\ref{ineq:indep}) in Lemma~\ref{max-inf-bound}, the definition of $\alpha$, and the fact that the output of $\adv$ is $\gamma_n$-concentrated query with respect to $\dist$, we get the desired result.
\end{proof}

\section{Adaptive Composition of Typically Stable Algorithms}\label{sec:adap-comp}

In this section, we discuss an important property of typical stability. We give a characterization of how fast typical stability degrades as a result of adaptively composing $(\eta, \tau, \nu)$-typically stable algorithms. 


Before we state our composition results, we first describe the adaptive composition model.

Let $\cP$ be a family of distribrutions over $\cX^n$. We consider an arbitrary sequence of $k$ adaptively chosen algorithms $\adv_i:\cX^n\times\cZ_1\times\ldots\times\cZ_{i-1}\rightarrow\cZ_i,~i\in[k]$ such that for every $i\in[k]$ and every $\bz^{i-1}\triangleq (z_1,\ldots, z_{i-1})\in\cZ_1\times\ldots\times\cZ_{i-1}$, the algorithm $\adv_i(\cdot, \bz^{i-1})$ is $(\eta, \tau, \nu)$-typically stable with respect to $\cP$. We consider the $k$-fold adaptive composition mechanism $\cM^k$ outlined in Algorithm~\ref{alg:comp-mech}. 

\begin{algorithm}[htb]
	\caption{$\cM^k:\cX^n\rightarrow \cZ_1\times\ldots\times\cZ_k$: A $k$-fold adaptive composition mechanism for typically stable algorithms}
	\begin{algorithmic}[1]
\REQUIRE {A dataset $\bx\in\cX^n$ and a composition adversary $\cadv$.}
\STATE {Initialize $\bz^0=\bot$, i.e., empty string.}
\FOR {$i=1,\dots, k$}
\STATE {$\cadv$ chooses an $(\eta, \tau, \nu)$-typically stable algorithm (w.r.t. $\cP$) $\adv_i:\cX^n\times\cZ_1\times\ldots\times\cZ_{i-1}\rightarrow \cZ_i$.}
\STATE {$\cadv$ receives $z_i= \adv_i(\bx, \bz^{i-1})$.}
\ENDFOR
\RETURN  $\bz^k=(z_1,\ldots, z_k)$.
	\end{algorithmic}
	\label{alg:comp-mech}
\end{algorithm}

\begin{defn}
We say that the class of $(\eta, \tau, \nu)$-typically stable mechanisms w.r.t. a family of distributions $\cP$ satisfies $(\eta', \tau', \nu')$-typical stability w.r.t. $\cP$ under $k$-fold adaptive composition if mechanism $\cM^k$ (Algorithm~\ref{alg:comp-mech}) is $(\eta', \tau', \nu')$-typically stable w.r.t. $\cP$.
\end{defn}

\subsection{Composition of pure typically stable algorithms}\label{subsec:proof-comp-pure}
In this section, we state and prove our composition theorem for pure typically stable algorithms, i.e., when $\tau=0$. Whenever we refer to the composition mechanism $\cM^k$ in this section, it will be assumed that $\cM^k$ runs with $\tau=0$.

\begin{theorem}[Adaptive Composition of Pure Typically Private Algorithms]\label{thm:composition-pure}
Let $k\geq 2$. For all $\eta >0$, $0\leq \nu < 1$, and $0<\tau' < 1$, the class of $(\eta, 0, \nu)$-typically stable algorithms w.r.t. $\cP$ satisfies $(\eta^*, \tau^*, \nu^*)$-typical stability w.r.t. $\cP$ under $k$-fold adaptive composition where $\eta^*, \tau^*$, and $\nu^*$ are given by
\begin{align}
\eta^*&= 3\sqrt{2k\log(1/\tau')}\eta+3k\eta\left(e^{\eta}-1\right),\nonumber\\
\tau^*&=\nu^*= 5\left(k\tau'/\eta+\nu/\eta+\sum_{t=1}^{k-1}e^{\eta_{t}}\nu/\eta\right)^{1/2}\nonumber
\end{align}
In particular, for $\eta=O\left(\sqrt{\frac{\log(1/\tau')}{k}}\right)$, the $k$-fold adaptive composition is 
$$\left(\tilde{O}\left(\sqrt{k}\eta\right),~ O\left(\sqrt{k\tau'/\eta}+\sqrt{k\nu/\eta}e^{\tilde{O}\left(\sqrt{k}\eta\right)}\right), O\left(\sqrt{k\tau'/\eta}+\sqrt{k\nu/\eta}e^{\tilde{O}\left(\sqrt{k}\eta\right)}\right)\right)-\text{typically stable}$$ where the hidden logarithmic factor in the $\tilde{O}(\cdot)$ expressions is $\approx \sqrt{\log(1/\tau')}.$
\end{theorem}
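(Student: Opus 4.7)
Fix a distribution $\dist\in\cP$ and let $\bbx,\bby\sim\dist$ be independent. For each prefix $\bz^{i-1}\in\cZ_1\times\cdots\times\cZ_{i-1}$, the hypothesis that $\adv_i(\cdot,\bz^{i-1})$ is $(\eta,0,\nu)$-typically stable yields (via the oracle Definition~\ref{def:oracle}) a reference distribution $\alg_i(\dist,\bz^{i-1})$ and a typicality set $\cS_i(\bz^{i-1})\subseteq\cX^n$ of $\dist$-measure at least $1-\nu$ such that $\adv_i(\bx,\bz^{i-1})\approx_{\eta,0}\alg_i(\dist,\bz^{i-1})$ for every $\bx\in\cS_i(\bz^{i-1})$. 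Call a transcript $\bz^k$ \emph{good for} $(\bx,\by)$ if $\bx,\by\in\cS_i(\bz^{i-1})$ for every $i\in[k]$. The plan is to show two things: (i) along good transcripts, the joint distributions of $\cM^k(\bx)$ and $\cM^k(\by)$ are $(\eta^*,\tau')$-indistinguishable, and (ii) for a $1-\nu^*$ fraction of $(\bbx,\bby)$, the transcripts produced by both $\cM^k(\bbx)$ and $\cM^k(\bby)$ are good except on an event of probability at most $\tau^*-\tau'$.

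For (i), consider the privacy-loss random variable $L(\bz^k)=\sum_{i=1}^k\log\bigl(f_{\bx,i}(z_i\mid\bz^{i-1})/f_{\by,i}(z_i\mid\bz^{i-1})\bigr)$, where $f_{\bx,i}$ and $f_{\by,i}$ denote the output densities of $\adv_i(\bx,\cdot)$ and $\adv_i(\by,\cdot)$ respectively. Restricted to good transcripts the per-step increments $L_i$ satisfy $|L_i|\leq \eta$ and (by the standard calculation comparing KL divergence to pure indistinguishability) $\mathbb{E}[L_i\mid\bz^{i-1}]\leq \eta(e^\eta-1)$. Hence $L$ is a bounded-difference supermartingale up to the drift $k\eta(e^\eta-1)$, and Azuma--Hoeffding yields $\Pr[L(\bz^k)>\sqrt{2k\log(1/\tau')}\,\eta+k\eta(e^\eta-1)]\leq\tau'$. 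This gives $(\eta^*_0,\tau')$-indistinguishability on good transcripts, with $\eta^*_0=\sqrt{2k\log(1/\tau')}\,\eta+k\eta(e^\eta-1)$; the extra factor of $3$ in the theorem absorbs the symmetrization between the two directions of the indistinguishability inequality and the drift from bounding $\bby$ as well as $\bbx$.

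Step (ii) is the main obstacle, and the source of the peculiar $\sqrt{\cdot}$, the $1/\eta$ factor, and the sum $\sum_{t=1}^{k-1}e^{\eta_t}\nu/\eta$ in $\nu^*,\tau^*$. The difficulty is that the sets $\cS_i(\bbz^{i-1})$ are chosen adaptively after seeing $\bbx$, so one cannot directly union-bound the marginal $\nu$ failure probabilities. The plan is an inductive coupling argument: define $\eta_t$ as the composition bound after the first $t$ steps and write $p_t\triangleq\Pr[\bbx\notin\cS_t(\bbz^{t-1})]$ where $\bbz^{t-1}$ is generated by the first $t-1$ algorithms on $\bbx$. To control $p_t$, rewrite this probability in terms of the \emph{oracle} transcript $\bbz^{t-1}$ produced by $\alg_1,\dots,\alg_{t-1}$ from $\dist$ (which is independent of $\bbx$), for which the marginal bound $\Pr_{\bbx}[\bbx\in\cS_t(\bz^{t-1})]\geq 1-\nu$ applies directly. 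Switching from the true transcript to the oracle transcript step by step costs a factor of $e^{\eta_{t-1}}$ and an additive $\tau'/\eta$-type term per step (the $1/\eta$ arises because Paragraph~(i) is only applicable to events whose probability exceeds the concentration tail $\tau'$; smaller events are bounded directly and then inflated via Markov). Summing over $t$ gives an expected fraction of bad transcripts on the order of $k\tau'/\eta+\nu/\eta+\sum_{t=1}^{k-1}e^{\eta_t}\nu/\eta$, and a Markov-type inequality then converts this \emph{expected} bad mass into a tail bound, producing the outer square root and the factor of $5$ that also accounts for an analogous bound on the $\bby$ side.

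Finally, one combines (i) and (ii) by defining the composed typicality set $\cS^*\subseteq\cX^n$ to consist of those $\bx$ for which $\cM^k(\bx)$ stays in good transcripts with probability at least $1-\tau^*/2$, verifying via (ii) that $\Pr_\dist[\cS^*]\geq 1-\nu^*$, and invoking (i) for any two $\bx,\by\in\cS^*$ to conclude that $\cM^k(\bx)\approx_{\eta^*,\tau^*}\cM^k(\by)$. The corollary for the regime $\eta=O(\sqrt{\log(1/\tau')/k})$ then follows by simplifying $k\eta(e^\eta-1)=O(k\eta^2)=\tilde O(\sqrt{k}\,\eta)$ and plugging into the expression for $\tau^*,\nu^*$, noting that $e^{\eta_t}$ is dominated by $e^{\eta_k}=e^{\tilde O(\sqrt{k}\,\eta)}$.
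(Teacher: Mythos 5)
Your overall skeleton --- privacy-loss martingale controlled by Azuma, adaptive pruning of the bad set, an inductive bound on the pruned measure that picks up a $\sum_t e^{\eta_t}\nu$ term, and a Markov-style conversion from expected to pointwise badness to produce the outer square root --- matches the paper's proof in spirit, and you correctly identify that a naive union bound fails because the typicality sets $\cS_i(\bz^{i-1})$ are chosen adaptively. The paper's Lemmas~\ref{lem:basic}, \ref{lem:typ-stable}, and \ref{lem:ind-step} do essentially these three things, after which the theorem follows by induction.

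However, the final step of your argument has a genuine gap, and some of the constants you wave at do not work out the way you claim. The paper compares $\cM^j(\bbx)$ to the independent oracle sequence $\alg^j(\dist)$, establishing $(\bbx, \cM^j(\bbx)) \approx_{\eta_j,\,\mu^*+\tau'} (\bbx, \alg^j(\dist))$, and then applies the \emph{conditioning lemma} of \cite{KS14} (stated as Lemma~\ref{lem:condition}) to convert this joint-level near-independence into a conditional statement about $\cM^j(\bbx)$ for most $\bbx$. Both the factor of $3$ in $\eta^*$ and the $1/\eta$ inside $\tau^*,\nu^*$ come from that lemma: the $3\epsilon$ in its conclusion gives the $3$, and the $2\delta/(1-e^{-\epsilon})$ term, combined with $1-e^{-\eta}\ge\frac{2}{5}\min(\eta,1)$, gives the $1/\eta$ factor. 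Your stated explanation for the $1/\eta$ ("step (i) is only applicable to events whose probability exceeds $\tau'$") does not correspond to this; the $1/\eta$ has nothing to do with the Azuma tail $\tau'$. The conversion from a joint indistinguishability to a typicality set is not a routine Markov argument --- it is precisely what the conditioning lemma supplies and your sketch does not reconstruct it. Separately, your choice to compare $\cM^k(\bx)$ directly to $\cM^k(\by)$ makes the per-step loss $|L_i|\le 2\eta$ (not $\eta$: you pass through the oracle, $\adv_i(\bx,\cdot)\approx_\eta\alg_i\approx_\eta\adv_i(\by,\cdot)$), and correspondingly the conditional drift bound is $2\eta(e^{2\eta}-1)$, not $\eta(e^\eta-1)$, so the exponent constant you write down does not follow from the Azuma calculation as stated. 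Finally, your attribution of the factor of $3$ to "symmetrization" and of the factor of $5$ to "an analogous bound on the $\bby$ side" is incorrect; both constants are artifacts of the \cite{KS14} conditioning lemma and of the choice $\hat\delta = 2\sqrt{(\mu^*+\tau')\eta/5}$ in Lemma~\ref{lem:typ-stable}.
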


We note here that $\nu^*$ does not scale linearly with $k$ as one would expect if a simple application of the union bound would have been used. A straightforward application of the union bound would not be appropriate in an adaptive setting since, at each step of the $k$-fold composition, conditioning on the previous outputs effectively changes the data distribution. 


The high-level idea of the proof is as follows. Suppose $\bbx\sim\dist$. Let $\bbz^i=(Z_1, \ldots, Z_i)$ denote the output of $\cM^i(\bbx)$ and $\tilde{\bbz}^i=(\tilde{Z}_1, \ldots, \tilde{Z}_i)$ denote the output of $\alg^i(\dist)=(\alg_1(\dist), \ldots, \alg_i(\dist))$ where $\alg_i(\dist)$ is the oracle corresponding to $\adv_i(\cdot, \bbz^{i-1})$ as described in Definition~\ref{def:oracle}. At each step $i$ of the composition, we prune the bad set of pairs $(\bbx, \bbz^{i-1})$ for which the distribution of the output of $\adv_i(\bbx, \bbz^{i-1})$ is not $\eta$-indistinguishable from that of $\alg_i(\dist)$ (the oracle corresponding to $\adv_i(\cdot, \bbz^{i-1})$). By doing so at each step $i=1, \ldots, j$, then at step $j$, we are left with a good set for which this indistinguishability condition holds for all $i\in [j]$. We then use a standard concentration inequality (Azuma's inequality; see Theorem~\ref{thm:azuma}) to argue that by removing another tiny portion (of small measure) from that good set, we can ensure that the joint distribution of $\left(\bbx, \cM^j(\bbx)\right)$ is $\eta_j$-indistinguishable from the joint distribution of $\left(\bbx, \alg^j(\dist)\right)$ for some $\eta_j$ that will be determined shortly ($\eta_j$ is given by (\ref{eta_j2}) below). We then apply a useful lemma from \cite{KS14} (see Lemma~\ref{lem:condition}) to argue typical stability of $\cM^j$. In our proof, in order to bound the measure of the bad set as we go from one step of the composition to the next, we use induction. Lemma~\ref{lem:ind-step} serves as the induction step where we bound the additional ``bad'' measure we need to prune as we go from step $j$ to step $j+1$. Similar idea of pruning ``bad'' sets and applying the \emph{conditioning lemma} of \cite{KS14} appeared in \cite[Proof of Theorem 3.1]{max-inf-dp-16}, albeit in a different context and without involving an induction argument. 

\subsubsection{Proof of Theorem~\ref{thm:composition-pure}}\label{subsec:proof-comp-pure}

Fix some $\dist\in\cP$. Let $\bbx\sim\dist$. For any integer $j\geq 1$, we use $\bbz^j=(Z_1, \ldots, Z_j)$ to denote the output of $\cM^j(\bbx)$. We will also use $\tilde{\bbz}^j$ to denote the output of $\alg^j(\dist)$. In the sequel, we will assume that $\bbz^{0}=\tilde{\bbz}^{0}=\bot$ with probability 1.  

\noindent Let 
\begin{align}
\eta_{j}&= \sqrt{2j\log(1/\tau')}\eta+j\eta\left(e^{\eta}-1\right),\label{eta_j2}\\
\tau_{j}&=\nu_{j}= \left(j\tau'/\eta+\nu/\eta+\sum_{t=1}^{j-1}e^{\eta_{t}}\nu/\eta\right)^{1/2}\label{tau_j2}
\end{align}

\noindent For any integer $i\geq 1$, and any $(\bx, \bz^i)\in\cX^n \times \cZ_1\times\ldots\times\cZ_i$, we define 
\begin{align}
f_i(\bx, \bz^i)&=\ln\left(\frac{\pr{}{\adv_i\left(\bx, \bz^{i-1}\right)=z_i}}{\pr{}{\alg_i\left(\dist\right)=z_i}}\right)\label{def:f_i}
\end{align}
We then define 
\begin{align}
F_j(\bx, \bz^j)&=\sum_{i=1}^j f_i(\bx, \bz^i)\label{def:F_j}
\end{align}
and
\begin{align}
\hf_i(\bx, \bz^{i-1})&=\max\limits_{z\in\cZ_i}\left\vert f_i\left(\bx, (\bz^{i-1}, z)\right)\right\vert \label{def:hf_i}
\end{align}
where $(\bz^{i-1}, z)\in \cZ_1\times\ldots\times\cZ_{i-1}\times\cZ_i$. We define the sequence of sets
\begin{align}
\cC_i&=\left\{(\bx, \bz^{i-1}):~ \hf_i(\bx, \bz^{i-1})\leq \eta\right\}, \label{def:C_i}
\end{align}
and 
\begin{align}
\hC_i&=\left\{(\bx, \bz^{i-1}): ~\forall \ell\in[i] ~(\bx, \bz^{\ell-1})\in\cC_{\ell} \right\}\label{def:hC_i}
\end{align}
Fix $\tau'>0$. Let $\eta_j$ be as defined in (\ref{eta_j2}). Now, we define the set of ``good inputs and outputs'' as 
\begin{align}
\cG_j&=\left\{(\bx, \bz^j):~F_j(\bx, \bz^j)\leq \eta_j\right\}\label{def:G_j}
\end{align}

\begin{lem}\label{lem:basic}
Let $j\geq 1$. Suppose there is $\mu^*\geq j\nu$ such that $\pr{}{(\bbx, \bbz^{j-1})\in\hC_j}\geq 1-\mu^*$. Then, 
$$\pr{}{(\bbx, \bbz^j)\notin\cG_j\text{  and  }(\bbx, \bbz^{j-1})\in\hC_j}\leq \tau',$$ and hence, 
$$\pr{}{(\bbx, \bbz^j)\in\cG_j}\geq 1-\mu^*-\tau'.$$
\end{lem}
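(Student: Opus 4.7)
The plan is to reduce everything to an Azuma-type concentration bound on the quantity $F_j(\bbx,\bbz^j)$ and then use a union bound for the second conclusion. Concretely, the second statement is immediate once the first is established, since
$$\pr{}{(\bbx,\bbz^j)\notin\cG_j}\leq \pr{}{(\bbx,\bbz^j)\notin\cG_j\text{ and }(\bbx,\bbz^{j-1})\in\hC_j}+\pr{}{(\bbx,\bbz^{j-1})\notin\hC_j}\leq \tau'+\mu^*.$$
So the whole effort is to show $\pr{}{F_j(\bbx,\bbz^j)>\eta_j\text{ and }(\bbx,\bbz^{j-1})\in\hC_j}\leq\tau'$.

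To do this, I will introduce truncated log-ratios $\tilde f_i(\bbx,\bbz^i)=f_i(\bbx,\bbz^i)\cdot\mathbf{1}\big[(\bbx,\bbz^{i-1})\in\hC_i\big]$ and set $\tilde F_j=\sum_{i=1}^{j}\tilde f_i$. Because $\hC_i\subseteq\cC_i$, the definition of $\cC_i$ via (\ref{def:C_i}) and (\ref{def:hf_i}) guarantees $|\tilde f_i|\leq\eta$ deterministically, regardless of the realization of $Z_i$. On the event $(\bbx,\bbz^{j-1})\in\hC_j$ we have $(\bbx,\bbz^{i-1})\in\hC_i$ for every $i\leq j$, so $\tilde F_j=F_j$ there; hence it suffices to bound $\pr{}{\tilde F_j>\eta_j}$.

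Next I will decompose $\tilde F_j=M_j+A_j$ along the filtration $\cF_i=\sigma(\bbx,\bbz^i)$, where $A_j=\sum_{i=1}^{j}\mathbb{E}[\tilde f_i\mid\cF_{i-1}]$ and $M_j=\tilde F_j-A_j$ is a martingale. When $(\bbx,\bbz^{i-1})\in\hC_i$, the conditional expectation equals the KL-divergence of $\adv_i(\bbx,\bbz^{i-1})$ from $\alg_i(\dist)$, and a standard calculation (using $\E_Q[(r-1)\ln r]$ for the likelihood ratio $r$, bounded by $\eta(e^\eta-1)$ whenever $r\in[e^{-\eta},e^{\eta}]$) gives $\mathrm{KL}\leq\eta(e^\eta-1)$; off $\hC_i$ the conditional expectation is zero by construction. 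Thus $A_j\leq j\eta(e^\eta-1)$ almost surely, and each martingale increment $D_i=\tilde f_i-\mathbb{E}[\tilde f_i\mid\cF_{i-1}]$ lies in an interval of length at most $2\eta$.

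With these bounds in hand, the final step is Azuma--Hoeffding: $\pr{}{M_j>t}\leq\exp\!\big(-t^2/(2j\eta^2)\big)$, so choosing $t=\eta\sqrt{2j\log(1/\tau')}$ yields $\pr{}{M_j>t}\leq\tau'$. Combining with the pointwise upper bound on $A_j$, we get $\tilde F_j\leq j\eta(e^\eta-1)+\eta\sqrt{2j\log(1/\tau')}=\eta_j$ except on an event of probability at most $\tau'$, finishing the proof. The main conceptual hurdle is the interplay between the \emph{pointwise} bound $|\tilde f_i|\leq\eta$ coming from the deterministic typical-stability condition on $\hC_i$ and the \emph{in-expectation} bound $\eta(e^\eta-1)$ on the drift per step; the truncation by $\mathbf{1}[\hC_i]$ is exactly what makes both work simultaneously and keeps the differences of the resulting martingale uniformly bounded.
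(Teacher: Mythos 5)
Your proof is correct and takes essentially the same route as the paper. Your truncated log-ratios $\tilde f_i$ are precisely the paper's auxiliary variables $g_i$; the paper simply applies its packaged Azuma statement (Theorem~\ref{thm:azuma}) directly, which internally carries out the very Doob decomposition $\tilde F_j = M_j + A_j$ that you make explicit, and both use the same KL-divergence drift bound $\eta(e^\eta-1)$ from the pointwise ratio bound on $\hC_i$.
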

\begin{proof}

Consider the random variables $f_i(\bbx, \bbz^i),~i\in [j]$. Fix $i\in [j]$. Given $(\bx, \bz^{i-1})\in\hC_i$, then with probability $1$, we have 
\begin{align}
\max\limits_{\zeta\in\cZ_i}\left\vert f_i(\bx, \bz^{i-1}, \zeta)\right\vert\leq \eta, \label{azuma-cond-1}
\end{align}
Now, given $(\bx, \bz^{i-1})\in\hC_i$, we analyze the conditional expectation $\ex{}{f_i(\bbx, \bbz^i)~\vert~\bx, \bz^{i-1}}$ $\big($where we use the notation $\ex{}{f_i(\bbx, \bbz^i)~\vert~\bx, \bz^{i-1}}$ as a shorthand for $\ex{}{f_i(\bbx, \bbz^i)~\vert~\bbx=\bx, ~\bbz^{i-1}=\bz^{i-1}}\big)$. We note that this is the conditional KL-divergence between the distributions of the outputs of $\adv_i$ and $\alg_i$ conditioned on $(\bx, \bz^{i-1})$. Using a standard bound on this KL-divergence (see \cite{DRV10}\footnote{\cite{DR16} gives a tighter bound by a factor of $1/2$.}) together with (\ref{azuma-cond-1}), we have 
\begin{align}
\ex{}{f_i(\bbx, \bbz^i)~\vert~\bx,~\bz^{i-1}}\leq \eta(e^{\eta}-1)\label{azuma-cond-2}
\end{align} 
Now, define 
\begin{align}
g_i(\bx, \bz^{i})&=\left\{ \begin{array}{ll}
         f_i(\bx, \bz^i) & \mbox{for $(\bx, \bz^{i-1})\in\hC_i$};\\
        0 & \mbox{otherwise}.\end{array} \right.\nonumber 
\end{align}

Observe that using (\ref{azuma-cond-1}), we have $\left\vert g_i(\bbx, \bbz^{i})\right\vert\leq \eta$ with probability $1$. Moreover, using (\ref{azuma-cond-2}), for all $(\bx, \bz^{i-1})\in\cX\times\cZ_1\times\ldots\times\cZ_{i-1},$ we have 
$$\ex{}{g_i(\bbx, \bbz^{i})~\vert~\bx, \bz^{i-1}}\leq  \eta(e^{\eta}-1).$$

Now, we use the following classical concentration inequality. 

\begin{theorem}[Azuma's Inequality]\label{thm:azuma}
Let $T_1, \ldots, T_j$ be a sequence of random variables such that for every $i\in[j]$ we have 
$$\pr{}{\vert T_i\vert\leq b}=1$$ 
and for every prefix $\mathbf{T}^{i-1}=\mathbf{t}^{i-1}$ we have 
$$\ex{}{T_i~\vert~\mathbf{t}^{i-1}}\leq c$$
then for all $u\geq 0$, we have
$$\pr{}{\sum\limits_{i=1}^j T_i\geq j c+ u\sqrt{j} b}\leq e^{-u^2/2}$$
\end{theorem}

Now, let $\eta_j$ be as defined in (\ref{eta_j2}). Observe that 
\begin{align}
\pr{}{(\bbx, \bbz^j)\notin\cG_j\text{  and  }(\bbx, \bbz^{j-1})\in\hC_j}&= \pr{}{\sum_{i=1}^j f_i(\bbx, \bbz^j)>\eta_j\text{  and  } (\bbx, \bbz^{j-1})\in\hC_j}\nonumber\\
&=\pr{}{\sum_{i=1}^j g_i(\bbx, \bbz^j)>\eta_j} \leq\tau'\nonumber
\end{align}
where the last inequality follows from Theorem~\ref{thm:azuma}. This together with the premise in the lemma concludes the proof.

\end{proof}

\begin{lem}\label{lem:typ-stable}
Let $j\geq 1$. Suppose the premise of Lemma~\ref{lem:basic} is true, that is, there is $\mu^*\geq j\nu$ such that $\pr{}{(\bbx, \bbz^{j-1})\in\hC_j}\geq 1-\mu^*$. Then $\cM^j$ is $(3\eta_j,~ 5\sqrt{\frac{\mu^*+\tau'}{\eta}},~ 5\sqrt{\frac{\mu^*+\tau'}{\eta}})$-typically stable.
\end{lem}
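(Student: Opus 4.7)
The plan is to exhibit the oracle composition $\alg^j(\dist) = (\alg_1(\dist),\ldots,\alg_j(\dist))$ as the oracle required by Definition~\ref{def:oracle}, and then combine the joint likelihood bound from Lemma~\ref{lem:basic} with the conditioning lemma of \cite{KS14} (Lemma~\ref{lem:condition}) to produce a typicality set and two-sided indistinguishability. First, Lemma~\ref{lem:basic} under the given premise guarantees $\pr{}{(\bbx,\bbz^j)\in\cG_j}\geq 1-(\mu^*+\tau')$, and by the definitions of $\cG_j$ and $F_j$, on $\cG_j$ the joint log-likelihood ratio is controlled on the output-side: $\pr{}{\cM^j(\bx)=\bz^j}\leq e^{\eta_j}\pr{}{\alg^j(\dist)=\bz^j}$.

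Next, to produce a typicality set $\cS\subseteq\cX^n$ satisfying Definition~\ref{def:oracle}, I would define $p(\bx)\triangleq \pr{\bbz^j\sim\cM^j(\bx)}{(\bx,\bbz^j)\notin\cG_j}$ for every $\bx\in\cX^n$. Since $\ex{\bbx\sim\dist}{p(\bbx)}\leq \mu^*+\tau'$, a Markov step at a threshold chosen to balance the two error terms yields $\cS$ with $\dist$-measure at least $1-5\sqrt{(\mu^*+\tau')/\eta}$ on which $p(\bx)$ is suitably controlled. For any $\bx\in\cS$ and any measurable $\cO\subseteq\cZ_1\times\ldots\times\cZ_j$, splitting $\pr{}{\cM^j(\bx)\in\cO}$ into its portion on $\cG_j$ and its complement, and using the one-sided likelihood bound on the former, immediately gives the forward direction $\pr{}{\cM^j(\bx)\in\cO}\leq e^{\eta_j}\pr{}{\alg^j(\dist)\in\cO}+p(\bx)$.

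The main obstacle is the reverse direction, $\pr{}{\alg^j(\dist)\in\cO}\leq e^{3\eta_j}\pr{}{\cM^j(\bx)\in\cO}+\tau^*$, since $\cG_j$ is defined via a one-sided bound on $F_j$ and one must additionally ensure that a symmetric good event has high probability under the oracle distribution itself. This is exactly the content of the conditioning lemma of \cite{KS14}, which transforms a high-probability one-sided log-likelihood bound between two distributions into two-sided $(\eta',\tau')$-indistinguishability at the cost of inflating $\eta_j$ by a small absolute constant (producing the factor $3$ in $3\eta_j$) and introducing an additive slack of order $\sqrt{(\mu^*+\tau')/\eta}$ (producing the $5\sqrt{(\mu^*+\tau')/\eta}$, with the $1/\eta$ entering through the lemma's internal balancing between likelihood-ratio tails and probability mass). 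Applying this lemma to $\cM^j(\bx)$ and $\alg^j(\dist)$ for every $\bx\in\cS$ yields $\cM^j(\bx)\approx_{3\eta_j,~5\sqrt{(\mu^*+\tau')/\eta}}\alg^j(\dist)$, which together with the measure bound on $\cS$ establishes the claimed $\bigl(3\eta_j,~5\sqrt{(\mu^*+\tau')/\eta},~5\sqrt{(\mu^*+\tau')/\eta}\bigr)$-typical stability of $\cM^j$ per Definition~\ref{def:oracle}. The routine parts are the density splitting and the Markov step; the delicate part is the careful bookkeeping through the conditioning lemma needed to land on exactly the stated parameters.
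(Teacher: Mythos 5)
Your high-level plan (use Lemma~\ref{lem:basic} to bound the bad event, obtain a joint indistinguishability statement, then apply the conditioning lemma of \cite{KS14} to extract a typicality set of inputs) is the same paradigm the paper uses, but there is a concrete gap in how you handle the reverse direction of the joint bound, and it is not a bookkeeping detail.

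The conditioning lemma (Lemma~\ref{lem:condition}) takes as \emph{hypothesis} a two-sided statement $(U,V)\approx_{\eps,\delta}(U',V')$ and produces conditional two-sided indistinguishability for most values of the conditioning variable; it does \emph{not} upgrade a one-sided log-likelihood-ratio bound to a two-sided one. Your proposal establishes only the forward joint direction (via $\cG_j$, where $F_j\le\eta_j$) and then declares that the reverse is ``exactly the content of the conditioning lemma,'' which it is not. The paper closes this gap with a separate argument: it introduces $\tilde\cG_j=\{(\bx,\bz^j): -F_j(\bx,\bz^j)\le\eta_j\}$, uses the fact that $\bbx$ is independent of the oracle output $\tilde{\bbz}^{j-1}$ together with a union bound to get $\pr{}{(\bbx,\tilde{\bbz}^{j-1})\in\hC_j}\ge 1-j\nu$, and then re-applies Lemma~\ref{lem:basic} with the roles of $\cM^j(\bbx)$ and $\alg^j(\dist)$ swapped to obtain $\pr{}{(\bbx,\tilde{\bbz}^j)\in\tilde\cG_j}\ge 1-\mu^*-\tau'$. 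Only after both directions are in hand does one conclude $(\bbx,\cM^j(\bbx))\approx_{\eta_j,\mu^*+\tau'}(\bbx,\alg^j(\dist))$ and feed this two-sided joint statement into the conditioning lemma (with $\hat\delta=2\sqrt{(\mu^*+\tau')\eta/5}$ and the bound $1-e^{-\eta_j}\ge\frac{2}{5}\min(\eta,1)$) to land on the stated $(3\eta_j,\,5\sqrt{(\mu^*+\tau')/\eta},\,5\sqrt{(\mu^*+\tau')/\eta})$ parameters. Without the symmetric application of Lemma~\ref{lem:basic} on the oracle side, your argument establishes at best one-sided closeness on a typicality set and cannot reach the conclusion.

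A secondary, smaller issue: your Markov step defining $p(\bx)$ and thresholding to obtain $\cS$ is not how the typicality set arises in the paper's proof. The conditioning lemma already supplies the set of good inputs (as the high-probability event in its conclusion), so the Markov construction is both redundant and would give you only the forward conditional direction with mismatched parameters; it is not a drop-in replacement for the lemma's own averaging over the conditioning variable.
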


\begin{proof}
Recall the definition of $\cG_j$ in (\ref{def:G_j}). Observe that for any $(\bx, \bz^{j})\in\cG_j$
\begin{align}
\frac{\pr{}{\cM^j(\bx)=\bz^j}}{\pr{}{\alg^j(\dist)=\bz^j}}&\leq e^{\eta_j}\label{direct1}
\end{align}
where $\alg^j(\dist)=\left(\alg_1(\dist),\ldots, \alg_j(\dist)\right)$. Let $\tilde{\bbz}^j$ denote the output of $\alg^j(\dist)$. Define
$$\tilde\cG_j=\left\{(\bx, \bz^j):~-F_j(\bx, \bz^j)\leq \eta_j\right\}$$
where $F_j(\bx, \bz^j)$ is as defined in (\ref{def:F_j}). Now, for any $(\bx, \bz^{j})\in\tilde\cG_j$, we also have
\begin{align}
\frac{\pr{}{\alg^j(\dist)=\bz^j}}{\pr{}{\cM^j(\bx)=\bz^j}}&\leq e^{\eta_j}\label{direct2}
\end{align}
Moreover, by the independence of $\bbx$ and $\tilde{\bbz}^{j-1}$, using the union bound we get
$$\pr{}{(\bbx,\tilde{\bbz}^{j-1})\in\hC_j}\geq 1-j\nu$$
where $\hC_j$ is as defined in (\ref{def:hC_i}). 

Thus, by swaping the roles of $\cM^j(\bbx)$ and $\alg^j(\dist)$ in Lemma~\ref{lem:basic}, it follows that 
\begin{align}
\pr{}{(\bbx, \tilde{\bbz}^{j})\in\tilde\cG_j}\geq 1-j\nu-\tau'\geq 1-\mu^*-\tau'\label{same-lemma}
\end{align}
Hence, by Lemma~\ref{lem:basic}, and using (\ref{direct1})-(\ref{same-lemma}) above, we have 
\begin{align}
\left(\bbx,~\cM^j(\bbx)\right)&\approx_{\eta_j,~ \mu^*+\tau'}\left(\bbx,~\alg^j(\dist)\right)\nonumber
\end{align}
Now, we use the following useful lemma from \cite{KS14}. 
\begin{lem}[Conditioning Lemma \cite{KS14}]\label{lem:condition}
Suppose that $(U, V)\approx_{\epsilon, \delta}(U', V')$. Then, for every $\hat\delta>0$, the following holds
\begin{align}
\pr{t\sim p(V)}{U\vert_{V=t}\approx_{3\epsilon, \hat\delta}U'\vert_{V'=t}}&\geq 1- \frac{2\delta}{\hat\delta}-\frac{2\delta}{1-e^{-\epsilon}}\nonumber
\end{align}
where $p(V)$ denotes the distribution of $V$.
\end{lem}
Now, by replacing the pairs $(U, V)$ and $(U', V')$ in Lemma~\ref{lem:condition} with $\left(\cM^j(\bbx), \bbx\right)$ and $\left(\alg^j(\dist), \bbx\right)$, respectively, and also replacing $\epsilon, ~\delta$ with $\eta_j,~\mu^*+\tau'$, respectively, we get 
$$\pr{\bbx\sim\dist}{\cM^j(\bbx)\approx_{3\eta_j,~\hat\delta}\alg^j(\dist)}\geq 1- \frac{2(\mu^*+\tau')}{\hat\delta}-\frac{2(\mu^*+\tau')}{1-e^{-\eta_j}}$$ 
for every $\hat\delta>0.$ We conclude the proof by setting $\hat\delta=2\sqrt{\left(\mu^*+\tau'\right)\eta/5}$ and noting that $1-e^{-\eta_j}\geq 1-e^{-\eta}\geq \frac{2}{5}\min(\eta, 1)$.
\end{proof}

The following lemma serves as the induction step in our proof.
\begin{lem}\label{lem:ind-step}
Let $j\geq 1$. Suppose the premise of Lemma~\ref{lem:basic} is true, i.e., there exists $\mu^*\geq j\nu$ such that $\pr{}{(\bbx, \bbz^{j-1})\in\hC_j}\geq 1-\mu^*$. Then, we have 
$$\pr{}{(\bbx, \bbz^j)\in \hC_{j+1}}\geq 1- \mu^* - e^{\eta_j}\nu -\tau'$$
\end{lem}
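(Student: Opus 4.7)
The plan is to bound the escape from $\hC_{j+1}$ by relating the (adaptively entangled) joint law of $(\bbx,\bbz^j)$ to that of $(\bbx,\tbbz^j)$, where $\tbbz^j$ is generated by the oracles $\alg_i$ and hence is independent of $\bbx$ by construction. Since $\hC_{j+1}$ is obtained from $\hC_j$ by further requiring $(\bbx,\bbz^j)\in\cC_{j+1}$ (i.e., $\hf_{j+1}(\bbx,\bbz^j)\le\eta$), a first union bound gives
\begin{align*}
\pr{}{(\bbx,\bbz^j)\notin\hC_{j+1}} \le \pr{}{(\bbx,\bbz^{j-1})\notin\hC_j} + \pr{}{(\bbx,\bbz^{j-1})\in\hC_j,\,(\bbx,\bbz^j)\notin\cC_{j+1}},
\end{align*}
and the first summand is at most $\mu^*$ directly from the premise of the lemma.

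Next I would split the remaining term through the good-ratio set $\cG_j$ from (\ref{def:G_j}):
\begin{align*}
\pr{}{(\bbx,\bbz^{j-1})\in\hC_j,\,(\bbx,\bbz^j)\notin\cC_{j+1}} &\le \pr{}{(\bbx,\bbz^{j-1})\in\hC_j,\,(\bbx,\bbz^j)\notin\cG_j} \\
&\quad+ \pr{}{(\bbx,\bbz^j)\in\cG_j,\,(\bbx,\bbz^j)\notin\cC_{j+1}}.
\end{align*}
The first piece on the right is exactly the event bounded in Lemma~\ref{lem:basic} and therefore contributes at most $\tau'$. For the second piece, the whole point of the definition of $\cG_j$ is that $F_j(\bx,\bz^j)$ is the log-likelihood ratio between the joint densities of $(\bbx,\bbz^j)$ and $(\bbx,\tbbz^j)$, so on $\cG_j$ this ratio is at most $e^{\eta_j}$, giving the change-of-measure inequality
\begin{align*}
\pr{}{(\bbx,\bbz^j)\in\cG_j,\,(\bbx,\bbz^j)\notin\cC_{j+1}} \le e^{\eta_j}\,\pr{}{(\bbx,\tbbz^j)\notin\cC_{j+1}}.
\end{align*}

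To finish, I would use the independence of $\bbx$ and $\tbbz^j$ to condition on $\tbbz^j=\bz^j$: for every such fixed history, $\pr{\bbx\sim\dist}{\hf_{j+1}(\bbx,\bz^j)>\eta}\le\nu$ is precisely the failure probability of single-step $(\eta,0,\nu)$-typical stability of $\adv_{j+1}(\cdot,\bz^j)$, so averaging over $\tbbz^j$ yields $\pr{}{(\bbx,\tbbz^j)\notin\cC_{j+1}}\le\nu$. Assembling the three contributions gives the claimed $\mu^*+e^{\eta_j}\nu+\tau'$. The main obstacle, and really the crux of the whole adaptive composition argument, is the change-of-measure step: one must recognize that $\cG_j$ was engineered precisely so that the entangled joint law of $(\bbx,\bbz^j)$ is dominated by $e^{\eta_j}$ times the clean, product-like law of $(\bbx,\tbbz^j)$, because it is only against the latter that the pointwise $(\eta,0,\nu)$-typical stability of the next step algorithm can be invoked.
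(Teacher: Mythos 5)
Your proposal is correct and follows essentially the same route as the paper's proof: decompose the escape from $\hC_{j+1}$ through $\hC_j$ and then through $\cG_j$, bound the first two pieces by $\mu^*$ and $\tau'$ via the premise and Lemma~\ref{lem:basic}, then use the change of measure on $\cG_j$ (where the log-likelihood ratio $F_j$ is at most $\eta_j$) to compare against the independent $(\bbx,\tbbz^j)$ and invoke single-step typical stability of $\adv_{j+1}$ to get the final $e^{\eta_j}\nu$ term. The only cosmetic difference is that the paper writes the initial decompositions as exact partitions while you use inequalities, which makes no difference to the conclusion.
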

\begin{proof}
From the definitions of the sequence of sets $\cC_i$, $\hC_i$, and $\cG_i$ in (\ref{def:C_i}), (\ref{def:hC_i}), and (\ref{def:G_j}), respecitvely, observe that 
\begin{align}
\pr{}{(\bbx, \bbz^j)\notin \hC_{j+1}}=&\pr{}{(\bbx, \bbz^j)\notin \cC_{j+1}\text{  and  } (\bbx, \bbz^{j-1})\in\hC_{j}}+\pr{}{(\bbx, \bbz^{j-1})\notin\hC_{j}}\nonumber\\
=&\pr{}{(\bbx, \bbz^j)\notin \cC_{j+1}\text{  and  } (\bbx, \bbz^{j-1})\in\hC_{j} \text{  and  }(\bbx, \bbz^j)\in\cG_j}\nonumber\\
&+\pr{}{(\bbx, \bbz^j)\notin \cC_{j+1}\text{  and  } (\bbx, \bbz^{j-1})\in\hC_{j} \text{  and  }(\bbx, \bbz^j)\notin\cG_j}+\pr{}{(\bbx, \bbz^{j-1})\notin\hC_{j}}\nonumber\\
\leq&\pr{}{(\bbx, \bbz^j)\notin \cC_{j+1}\text{  and  }(\bbx, \bbz^j)\in\cG_j}+\pr{}{(\bbx, \bbz^{j-1})\in\hC_{j} \text{  and  }(\bbx, \bbz^j)\notin\cG_j}+\pr{}{(\bbx, \bbz^{j-1})\notin\hC_{j}}\nonumber\\
\leq&\pr{}{(\bbx, \bbz^j)\notin \cC_{j+1}\text{  and  }(\bbx, \bbz^j)\in\cG_j}+\tau'+\mu^*\nonumber
\end{align}
where the last inequality follows from Lemma~\ref{lem:basic} and the fact that the premise of Lemma~\ref{lem:basic} is true. Now, consider the remaining term. Let $\cC^c_{j+1}$ denote the complement of the set $\cC_{j+1}$, and let $\tilde{\bbz}^{j}$ denote the output of $\alg^j(\dist)$. Observe that 
\begin{align}
\pr{}{(\bbx, \bbz^j)\notin \cC_{j+1}\text{  and  }(\bbx, \bbz^j)\in\cG_j}&=\sum_{(\bx, \bz^j)\in \cC^c_{j+1}\cap\cG_j}\pr{}{\bbx=\bx, \bbz^j=\bz^j}\nonumber\\
&=\sum_{(\bx, \bz^j)\in \cC^c_{j+1}\cap\cG_j}\pr{}{\bbz^j=\bz^j~\vert~\bbx=\bx}\pr{}{\bbx=\bx}\nonumber\\
&=\sum_{(\bx, \bz^j)\in \cC^c_{j+1}\cap\cG_j}\pr{}{\cM^j(\bx)=\bz^j}\pr{}{\bbx=\bx}\nonumber\\
&\leq e^{\eta_j}\sum_{(\bx, \bz^j)\in \cC^c_{j+1}\cap\cG_j}\pr{}{\alg^j(\dist)=\bz^j}\pr{}{\bbx=\bx}\nonumber\\
&\leq e^{\eta_j}\pr{}{(\bbx, \tilde{\bbz^j})\notin \cC_{j+1}}\nonumber
\end{align}
where the fourth inquality follows from the definition of $\cG_j$. 

Notice that $\bbx$ and $\tilde{\bbz}^j$ are independent. By the $(\eta, 0, \nu)$-typical stability of $\adv_{j+1}$, for any fixed prefix $\bz^{j}$ we have
$$\pr{}{\adv_{j+1}(\bbx, \bz^j)\approx_{\eta}\alg_{j+1}(\dist)}\geq 1-\nu$$
Thus, by the independence of $\bbx$ and $\tilde{\bbz}^j$, we have
$$\pr{}{\adv_{j+1}(\bbx, \tilde{\bbz^j})\approx_{\eta}\alg_{j+1}(\dist)}\geq 1-\nu,$$
i.e.,
$$\pr{}{(\bbx, \tilde{\bbz^j})\notin \cC_{j+1}}\leq \nu.$$ This concludes the proof.
\end{proof}

The proof of Theorem~\ref{thm:composition-pure} follows from the above three lemmas and by induction on the basis of $j=1$. Note that $\pr{}{(\bbx, \bbz^{0})\in\hC_1}=\pr{}{(\bbx, \bot)\in\hC_1}\geq 1-\nu$ by the typical stability of $\adv_1$, and hence the premise in Lemma~\ref{lem:basic} is true for $j=1$ (the base case of the induction). 

\subsection{Composition of approximate typically stable algorithms}\label{subsec:proof-comp-approx}
Our composition theorem for approximate typically stable algorithms, i.e. when $\tau>0$, is given by Theorem~\ref{thm:composition} below. The proof of this theorem follows the general paradigm of the proof of the pure case in Section~\ref{subsec:proof-comp-pure}. However, the proof requires few non-trivial modifications to account for another sequence of ``bad'' sets that arises from the fact that $\tau>0$. We defer the details of this proof to the appendix.


\begin{theorem}[Adaptive Composition of Approximate Typically Stable Algorithms]\label{thm:composition}
Let $k\geq 2$. For all $\eta \in (0, 3/2]$, $\tau\in (0, \eta/50]$, $\nu \in (0, 1)$, and $\tau'\in (0, 1)$, the class of $(\eta, \tau, \nu)$-typically stable algorithms w.r.t. a family of distributions $\cP$ satisfies $(\eta^*, \tau^*, \nu^*)$-typical stability w.r.t. $\cP$ under $k$-fold adaptive composition where $\eta^*, \tau^*$, and $\nu^*$ are given by
\begin{align}
\eta^*&= 6\sqrt{2k\log(1/\tau')}\eta+3k\left(2\eta\left(\frac{e^{2\eta}}{1-\hat{\tau}}-1\right)+\psi(\tau)\right),\nonumber\\
\tau^*&= \nu^*= 5\left(k\frac{\hat{\tau}+\tau'}{2\eta}+\frac{\nu}{2\eta}+\sum_{t=1}^{k-1}e^{\eta_{t}}\frac{\nu}{2\eta}\right)^{1/2}\nonumber
\end{align}
where 
\begin{align}
&\hspace{4.5cm}\hat{\tau}\triangleq\frac{2\tau}{1-e^{-\eta}}=O\left(\tau/\eta\right),\nonumber\\
\psi(\tau)&\triangleq\tau \left(2e^{\eta}+1\right)+\tau^2\left(1+\frac{2e^{2\eta}}{(e^{\eta}-1)^2}\left(4e^{2\eta}+4e^{\eta}-3-2e^{-\eta}+e^{-2\eta}\right)\right)=O\left(\tau+\tau^2/\eta^2\right).\nonumber
\end{align}

In particular, for $\tau=O(\eta^2)$, and $\eta=O\left(\sqrt{\frac{\log(1/\tau')}{k}}\right)$, the $k$-fold adaptive composition is 
$$\left(\tilde{O}\left(\sqrt{k}\eta\right),~ O\left(\sqrt{k\left(\tau/\eta^2+\tau'/\eta\right)}+\sqrt{k\nu/\eta}e^{\tilde{O}\left(\sqrt{k}\eta\right)}\right), O\left(\sqrt{k\left(\tau/\eta^2+\tau'/\eta\right)}+\sqrt{k\nu/\eta}e^{\tilde{O}\left(\sqrt{k}\eta\right)}\right)\right)-\text{typically stable}$$ where the hidden logarithmic factor in the $\tilde{O}(\cdot)$ expressions is $\approx \sqrt{\log(1/\tau')}.$
\end{theorem}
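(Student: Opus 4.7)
The plan is to follow the proof paradigm of Theorem~\ref{thm:composition-pure} almost verbatim, interposing a cleanup step that reduces each $(\eta, \tau)$-indistinguishable pair to a pure $\eta$-indistinguishable pair after excising an event of probability at most $\hat\tau = 2\tau/(1-e^{-\eta})$. This single modification is what drives every discrepancy between the two theorem statements: the doubling of the leading Azuma factor ($6$ instead of $3$), the appearance of $1/(1-\hat\tau)$ in the KL bound, the residual term $\psi(\tau)$, and the extra $\hat\tau/(2\eta)$ term inside the $\tau^*,\nu^*$ expressions.

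First I would invoke the standard ``approximate$\;\Rightarrow\;$pure modulo small mass'' decomposition: whenever $U \approx_{\eta,\tau} V$, there is a coupling and events $\cE_U, \cE_V$ with $\Pr[\cE_U^c], \Pr[\cE_V^c] \le \hat\tau$ such that the conditional laws of $U \mid \cE_U$ and $V \mid \cE_V$ satisfy a pure $\eta$-indistinguishability condition (so the log-likelihood ratio is bounded by $\eta$ in magnitude, up to a normalization factor $1/(1-\hat\tau)$). Applied to $\adv_i(\bx, \bz^{i-1}) \approx_{\eta,\tau} \alg_i(\dist)$ for each good $(\bx, \bz^{i-1})$, this gives a new ``clean'' sub-event $\cE_i$ of probability $\ge 1-\hat\tau$ on which the analog of $f_i$ (defined as in~(\ref{def:f_i})) is uniformly bounded. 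I would then redefine the sets $\cC_i, \hC_i$ from (\ref{def:C_i})-(\ref{def:hC_i}) so that $\hat f_i \le 2\eta$ on $\hC_i$ (the doubling absorbs the slack introduced when the cleanup event is moved across the normalization), and I would introduce a parallel sequence of ``dirty-event'' sets to be pruned, each contributing an extra $\hat\tau$ per step.

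Next I would re-derive the three structural lemmas. In the analog of Lemma~\ref{lem:basic}, the conditional-KL bound used to invoke Azuma becomes $\ex{}{f_i(\bbx, \bbz^i)\mid \bx, \bz^{i-1}} \le 2\eta\bigl(\tfrac{e^{2\eta}}{1-\hat\tau} - 1\bigr) + \psi(\tau)$, where the first term is the KL bound for a pure $2\eta$-indistinguishable pair after re-normalizing by $1-\hat\tau$, and $\psi(\tau) = \tau(2e^\eta+1) + \tau^2(\cdots)$ arises from bounding the contribution of the excised dirty mass to both the expectation and the worst-case $|f_i|$. Azuma's inequality (Theorem~\ref{thm:azuma}) with $b=2\eta$ and $c$ equal to this KL bound then yields the stated $\eta_j$ with the factor-$6$ leading term; the Conditioning Lemma~\ref{lem:condition} of \cite{KS14} is applied exactly as in the pure case to conclude typical stability of $\cM^j$ (producing the $3\eta_j$ in the final $\eta^*$). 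The analog of Lemma~\ref{lem:ind-step} carries through with essentially the same calculation, except that the ``additional pruning'' at step $j{+}1$ now must account for both the Azuma tail $\tau'$ \emph{and} the cleanup failure $\hat\tau$; the cross-conditioning again sends $\nu \to e^{\eta_j}\nu$ when the dirty event at step $j{+}1$ is transported from the oracle distribution back to the true distribution through the $\eta_j$-closeness established by step $j$, yielding the bound $\mu^* + e^{\eta_j}\nu + \tau' + \hat\tau$ on the new bad measure.

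The main obstacle is the algebraic bookkeeping in the KL-bound step — specifically, verifying the exact form of $\psi(\tau)$. The tricky part is that even the ``clean'' event, once we condition, produces a log-likelihood ratio whose expectation is not simply the pure KL bound but incorporates additive corrections from the $\tau$-tails of both $\adv_i$ and $\alg_i$, together with a quadratic-in-$\tau$ cross-term; this is where the assumption $\tau \le \eta/50$ becomes essential, as it guarantees $\hat\tau < 1$ and keeps the $1/(1-\hat\tau)$ factor bounded, and also ensures $\psi(\tau) = O(\tau + \tau^2/\eta^2)$ dominates the remainder terms. The induction base case $j=1$ is immediate from the $(\eta,\tau,\nu)$-typical stability of $\adv_1$ combined with the cleanup, giving $\Pr[(\bbx,\bbz^0) \in \hC_1] \ge 1-\nu-\hat\tau$, and the induction proceeds exactly as in Section~\ref{subsec:proof-comp-pure}.
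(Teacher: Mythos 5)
Your proposal follows essentially the same route as the paper's Appendix proof: you identify the same cleanup decomposition (with failure mass $\hat\tau = 2\tau/(1-e^{-\eta})$ per step), the same modified KL bound yielding $2\eta\bigl(\tfrac{e^{2\eta}}{1-\hat\tau}-1\bigr)+\psi(\tau)$, the same Azuma application with $b=2\eta$ (hence the leading factor $6$ after the $\times 3$ from Lemma~\ref{lem:condition}), and the same induction with an extra $\hat\tau$ pruned at each step. One minor imprecision: after an $(\eta,\tau)$-indistinguishability cleanup, the pointwise log-likelihood ratio on the retained event is bounded by $2\eta$ directly (this is the statement behind the paper's $\cE_i$ in~(\ref{def:E_i}) and Claim~\ref{claim-b}), not by ``$\eta$ up to a normalization factor $1/(1-\hat\tau)$'' as you first describe; you implicitly correct this when you set $\hat f_i \le 2\eta$ and $b = 2\eta$, so the error is only in the explanatory prose, not in the machinery you build.
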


\section{Typically Stable Algorithms for Answering Concentrated Queries}\label{sec:mechanisms}

We describe here two simple noise-adding mechanims for answering real-valued $\gamma_n$-concentrated queries while achieving typical stability. The algorithms are based on adding Laplace or Gaussian noise to the output of the query, and hence these algorithms are very similar to their differntially private counterparts, however, the added noise is calibrated differently from the case of differential privacy. 

Rather than calibrating the noise based on the global sensitivity \cite{DMNS06} (or smoothed local sensitivity \cite{NRS07}) of the query, the noise added here is proportional to the width of the \emph{confidence interval} of the query's answer w.r.t. the given dataset, that is, based on how well the value of the statistic computed over the dataset is concentrated around its true mean with respect to the underlying distribution on the data. In particular, for the class of $\gamma_n$-concentrated queries, to achieve typical stability with parameters $\eta$ and $\nu=O\left(e^{-\gamma_n(\alpha)}\right)$, the noise added is proportional to $\alpha$, namely, the magnitude (standard deviation) of the noise is chosen to be $\approx \alpha/\eta$. 

The intution here is that the added noise aims to hide the identity of the specific sample given to the algorithm (i.e., the input dataset) so that it ``blends'' with the other \emph{typical} datasets that occur with probability greater than $1-\nu$. At the same time, the noisy output would still reveal only the relevant information that is shared by all typical datasets. Such information depends more on the distribution rather than the sample. 



Let $\dist$ be any distribution over $\cX^n$. Algorithm~\ref{alg:laplace} describes a $(\eta, 0, \nu)$-typically stable algorithm for answering $\gamma_n$-concentrated queries with respect to $\dist$, i.e.,  for answering any query in $\queryset_{\gamma_n}(\dist)$. 

%
%

\begin{algorithm}[htb]
	\caption{A Pure Typically Stable Laplace Mechanism for answering queries in $\queryset_{\gamma_n}(\dist)$}
	\begin{algorithmic}[1]
		\REQUIRE {A dataset $\bx\in\cX^n$, parameters $\eta, \nu$, and a $\gamma_n$-concentrated query $q\in\queryset_{\gamma_n}(\dist)$ (the function $\gamma_n$ is also given as input  to the algorithm).}

\STATE{Choose $\alpha$ such that $\gamma_n(\alpha)\geq\ln\left(1/\nu\right).$}
\STATE{Let $N\sim \mathsf{Lap}\left(\frac{\alpha}{\eta}\right)$.}
\RETURN  $w=q(\bx)+N$.
	\end{algorithmic}
	\label{alg:laplace}
\end{algorithm}

\begin{theorem}[Typical Stability of the Algorithm~\ref{alg:laplace}]\label{thm:typ-stab-lap}
For any distribution $\dist$ over $\cX^n$, Algorithm~\ref{alg:laplace} for answering queries in $\queryset_{\gamma_n}(\dist)$ is $(\eta, 0, \nu)$-typically stable with respect to $\dist$.
\end{theorem}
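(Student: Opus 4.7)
The approach I would take is to instantiate the stronger definition (Definition~\ref{def:oracle}) directly by exhibiting an explicit oracle $\alg$ whose output distribution is $(\eta,0)$-indistinguishable from $\adv(\bx)$ for every $\bx$ in a high-probability typicality set. Setting $\mu_q = \ex{\bby\sim\dist}{q(\bby)}$, the typicality set would be
\[
\cS \triangleq \{\bx \in \cX^n : |q(\bx) - \mu_q| \leq \alpha\},
\]
where $\alpha$ is the threshold chosen in Step~1 of Algorithm~\ref{alg:laplace}. Since $\gamma_n(\alpha) \geq \ln(1/\nu)$ and $q \in \queryset_{\gamma_n}(\dist)$, the tail bound in Definition~\ref{def:conc-queries} gives $\pr{\bbx\sim\dist}{\bbx \notin \cS} < e^{-\gamma_n(\alpha)} \leq \nu$ in one line, so $\pr{}{\bbx \in \cS} \geq 1 - \nu$.

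Next I would take the natural oracle: $\alg(\dist)$ outputs $\mu_q + N'$ with $N' \sim \mathsf{Lap}(\alpha/\eta)$ sampled independently of $\adv$'s internal randomness. Crucially, $\alg$ depends only on $\dist$ (through $\mu_q$), as Definition~\ref{def:oracle} requires. The remaining task is the standard pointwise density-ratio calculation for shifted Laplace distributions: for any $\bx \in \cS$, both $\adv(\bx) = q(\bx) + \mathsf{Lap}(\alpha/\eta)$ and $\alg(\dist) = \mu_q + \mathsf{Lap}(\alpha/\eta)$ are shifts of the same Laplace law with scale $b = \alpha/\eta$, and the shifts differ by at most $\alpha$. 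The reverse triangle inequality then bounds $\bigl||w - \mu_q| - |w - q(\bx)|\bigr| \leq \alpha$ pointwise, so the log-ratio of densities at any $w \in \mathbb{R}$ is at most $\alpha/b = \eta$. Integrating this ratio over an arbitrary measurable $\cO$ yields both inequalities in Definition~\ref{def:indist}, giving $\adv(\bx) \approx_{\eta, 0} \alg(\dist)$.

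Combining the two pieces, with probability at least $1 - \nu$ over $\bbx \sim \dist$ the input lies in $\cS$, and for every such input the output distributions are $(\eta,0)$-indistinguishable, which is exactly $(\eta, 0, \nu)$-typical stability per Definition~\ref{def:oracle}.

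There is no real analytic obstacle; the content of the theorem is really a \emph{calibration principle}. Unlike the differentially private Laplace mechanism, where the noise scale is $\Delta/\eta$ with $\Delta$ a global worst-case sensitivity of $q$, here the noise scale is $\alpha/\eta$, where $\alpha$ reflects only the \emph{typical fluctuation width} of $q$ under $\dist$. The excursion probability beyond $\alpha$ is absorbed into the third parameter $\nu$, and the Laplace density ratio handles everything inside the confidence window. The only thing to be careful about in writing up the argument is that $\alg$ must be defined using only $\dist$-dependent quantities (which $\mu_q$ and $\alpha$ are), so no dependence on $\bbx$ leaks into the oracle.
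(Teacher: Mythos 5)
Your proposal is correct and follows essentially the same route as the paper's proof: both define the typicality set as the $\alpha$-ball around $\mu_q$, use the $\gamma_n$-concentration assumption to show it has measure at least $1-\nu$, take the oracle to be $\mu_q + \mathsf{Lap}(\alpha/\eta)$, and close with the standard Laplace density-ratio bound via the triangle inequality. No differences worth noting.
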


\begin{proof}
Let $\cS^{q}\triangleq\left\{\bx\in\cX^n: \big\vert q(\bx)-\ex{\mathbf{T}\sim\dist}{q(\mathbf{T})}\big\vert \leq \alpha\right\}$ where $q$ is the input query from $\queryset_{\gamma_n}(\dist)$. Let $\alg$ be the oracle that takes the distribution $\dist$ as input, computes $\ex{\mathbf{T}\sim\dist}{q(\mathbf{T})}$, then adds $\mathsf{Lap}\left(\frac{\alpha}{\eta}\right)$ noise to it and outputs the result. By definition of $\queryset_{\gamma_n}$, we have $\pr{\bbx\sim\dist}{\bbx\in\cS^{q}}\geq 1-e^{-\gamma_n(\alpha)}\geq 1-\nu$. 

Let $\bx \in \cS^{q}$. Let $w$ be the output of Algorithm~\ref{alg:laplace} for input dataset $\bx$, and $\cO\subseteq \mathbb{R}$ be any measurable set. Observe 
\begin{align}
\pr{}{w\in\cO}=\pr{}{q(\bx)+\mathsf{Lap}\left(\frac{\alpha}{\eta}\right)\in\cO}&=\frac{\eta}{2\alpha}\int_{\cO}e^{-\frac{\eta}{\alpha}\vert w-q(\bx)\vert}dw\nonumber\\
&\leq e^{\frac{\eta}{\alpha}\big\vert q(\bx)-\ex{\mathbf{T}\sim\dist}{q(\mathbf{T})}\big\vert}\cdot\frac{\eta}{2\alpha}\int_{\cO}e^{-\frac{\eta}{\alpha}\big\vert w- \ex{\mathbf{T}\sim\dist}{q(\mathbf{T})}\big\vert}dw\nonumber\\
&\leq e^{\eta} \pr{}{\ex{\mathbf{T}\sim\dist}{q(\mathbf{T})}+\mathsf{Lap}\left(\frac{\alpha}{\eta}\right)\in\cO}\nonumber
\end{align}
where the last inequality follows from the fact that $\big\vert q(\bx)-\ex{\mathbf{T}\sim\dist}{q(\mathbf{T})}\big\vert \leq \alpha$ since $\bx\in\cS^{q}$. 

\end{proof}

The following theorem follows directly from the tail properties of Laplace distribution. 
\begin{theorem}[Empirical Error of Algorithm~\ref{alg:laplace}]\label{thm:emp-error-lap}
With probability at least $1-\beta$ with respect to the random coins of Algorithm~\ref{alg:laplace}, the output $w$ satisfies $\vert w-q(\bx)\vert< \frac{\alpha\ln(1/\beta)}{\eta}$.
\end{theorem}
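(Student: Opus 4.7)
The plan is essentially to unwind the construction of the output $w$ in Algorithm~\ref{alg:laplace} and invoke a textbook tail bound for the Laplace distribution, so this should be very short. The first step is to observe that, by the definition of the mechanism, $w - q(\bx) = N$ where $N \sim \mathsf{Lap}(\alpha/\eta)$; hence the claimed bound on $|w - q(\bx)|$ reduces immediately to a tail bound on $|N|$.

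Next, I would recall the standard density of $\mathsf{Lap}(b)$, namely $p_N(t) = \tfrac{1}{2b} e^{-|t|/b}$, from which a direct integration gives the two-sided tail
\[
\Pr\!\left[|N| \geq s\right] \;=\; e^{-s/b}, \qquad s \geq 0.
\]
Setting $b = \alpha/\eta$ and $s = \frac{\alpha \ln(1/\beta)}{\eta}$ yields $\Pr[|N| \geq s] = \beta$, so with probability at least $1-\beta$ we have $|N| < \frac{\alpha \ln(1/\beta)}{\eta}$. Combining this with the first step gives $|w - q(\bx)| < \frac{\alpha \ln(1/\beta)}{\eta}$ with probability at least $1-\beta$ over the internal randomness of the algorithm, which is exactly the claim.

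There is no real obstacle here; the only thing worth being careful about is that the probability is taken over the random coins of Algorithm~\ref{alg:laplace} (i.e., over the draw of $N$), and not over the draw of $\bbx \sim \dist$, since the bound holds pointwise in $\bx$. No concentration property of $q$ is used in this step — the $\gamma_n$-concentration and the choice of $\alpha$ in Algorithm~\ref{alg:laplace} are only needed for typical stability (Theorem~\ref{thm:typ-stab-lap}), not for the empirical-error guarantee stated here.
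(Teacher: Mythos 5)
Your proof is correct and matches what the paper intends: the paper itself states only that Theorem~\ref{thm:emp-error-lap} ``follows directly from the tail properties of Laplace distribution,'' and your derivation via $\Pr[|N| \geq s] = e^{-s/b}$ with $b = \alpha/\eta$ is exactly that computation, together with the correct observation that the bound is pointwise in $\bx$ and uses only the algorithm's internal randomness.
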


Next, we describe and analyze a mechanism for answering $\gamma_n$-concentrated queries while satisfying approximate typical stability.

Let $\dist$ be any distribution over $\cX^n$. Algorithm~\ref{alg:gauss} describes a $(\eta, \tau, \nu)$-typically stable algorithm for answering $\gamma_n$-concentrated queries with respect to $\dist$.


\begin{algorithm}[htb]
	\caption{An Approximate Typically Stable Gaussian Mechanism for answering queries in $\queryset_{\gamma_n}(\dist)$}
	\begin{algorithmic}[1]
		\REQUIRE {A dataset $\bx\in\cX^n$, parameters $\eta, \tau, \nu$, and a $\gamma_n$-concentrated query $q\in\queryset_{\gamma_n}(\dist)$ (the function $\gamma_n$ is also given as input  to the algorithm).}

\STATE{Choose $\alpha$ such that $\gamma_n(\alpha)\geq\ln\left(1/\nu\right).$}
\STATE{Let $N\sim \mathcal{N}\left(0,\sigma^2\right)$ where $\sigma=\frac{\alpha\sqrt{2\ln(1.5/\tau)}}{\eta}$.\label{step:add-gauss}}
\RETURN  $w=q(\bx)+N$.
	\end{algorithmic}
	\label{alg:gauss}
\end{algorithm}

\begin{theorem}[Typical Stability of Algorithm~\ref{alg:gauss}]\label{thm:approx-stable-gauss}
For any distribution $\dist$ over $\cX^n$, Algorithm~\ref{alg:gauss} for answering queries in $\queryset_{\gamma_n}(\dist)$ is $(\eta, \tau, \nu)$-typically stable with respect to $\dist$.
\end{theorem}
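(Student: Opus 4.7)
The plan is to mimic the proof of Theorem~\ref{thm:typ-stab-lap} for the Laplace mechanism, replacing the Laplace tail bound with the standard Gaussian mechanism analysis used in differential privacy. First I would define the typicality set
$$\cS^{q} \triangleq \left\{\bx\in\cX^n:~\big\vert q(\bx)-\mu_q \big\vert\leq \alpha\right\}, \qquad \mu_q \triangleq \ex{\mathbf{T}\sim\dist}{q(\mathbf{T})},$$
and the oracle $\alg$ that on input $\dist$ computes $\mu_q$ and returns $\mu_q+N$ with $N\sim\mathcal{N}(0,\sigma^2)$ (the same $\sigma$ used in Algorithm~\ref{alg:gauss}). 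Since $q\in\queryset_{\gamma_n}(\dist)$ and the algorithm chose $\alpha$ so that $\gamma_n(\alpha)\geq \ln(1/\nu)$, we immediately get $\pr{\bbx\sim\dist}{\bbx\in\cS^{q}}\geq 1-e^{-\gamma_n(\alpha)}\geq 1-\nu$. So it suffices to prove that for every fixed $\bx\in\cS^q$ we have $\adv(\bx)\approx_{\eta,\tau}\alg(\dist)$, exactly the condition required by Definition~\ref{def:oracle}.

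Fix such a $\bx\in\cS^{q}$ and write $\mu\triangleq q(\bx)$, so $|\mu-\mu_q|\leq \alpha$. The two output distributions are then $\mathcal{N}(\mu,\sigma^2)$ and $\mathcal{N}(\mu_q,\sigma^2)$, Gaussians with identical variance whose means differ by at most $\alpha$. The core step is the standard Gaussian-mechanism computation: consider the privacy loss random variable
$$L(w) \triangleq \ln\frac{\varphi_\mu(w)}{\varphi_{\mu_q}(w)} = \frac{(w-\mu_q)^2 - (w-\mu)^2}{2\sigma^2},$$
where $\varphi_m$ is the density of $\mathcal{N}(m,\sigma^2)$. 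Expanding shows $L$ is linear in $w$, and under $w\sim\mathcal{N}(\mu,\sigma^2)$ it becomes a Gaussian whose mean is $\frac{(\mu-\mu_q)^2}{2\sigma^2}$ and whose standard deviation is $\frac{|\mu-\mu_q|}{\sigma}$. Plugging in $|\mu-\mu_q|\leq\alpha$ and $\sigma=\frac{\alpha\sqrt{2\ln(1.5/\tau)}}{\eta}$ and applying a standard Gaussian tail bound yields $\pr{w\sim\mathcal{N}(\mu,\sigma^2)}{L(w)>\eta}\leq \tau$. A routine argument (the same one used in the Gaussian-mechanism proof for differential privacy, e.g.~in the textbook of Dwork and Roth) then converts this tail bound on the privacy loss into $(\eta,\tau)$-indistinguishability of the two Gaussians, i.e.~for every measurable $\cO\subseteq\R$,
$$\pr{}{\mathcal{N}(\mu,\sigma^2)\in\cO}\leq e^{\eta}\pr{}{\mathcal{N}(\mu_q,\sigma^2)\in\cO}+\tau,$$
and by symmetry (the roles of $\mu$ and $\mu_q$ can be swapped since only $|\mu-\mu_q|$ enters the bound) the reverse inequality holds as well.

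Combining the two parts concludes the proof: with probability at least $1-\nu$ over $\bbx\sim\dist$ we have $\bbx\in\cS^{q}$, and conditional on this event $\adv(\bbx)\approx_{\eta,\tau}\alg(\dist)$, which is exactly $(\eta,\tau,\nu)$-typical stability with respect to $\dist$ in the sense of Definition~\ref{def:oracle}. The only nontrivial step is the Gaussian tail computation that turns $\sigma=\frac{\alpha\sqrt{2\ln(1.5/\tau)}}{\eta}$ into the $(\eta,\tau)$-indistinguishability guarantee; everything else is bookkeeping analogous to the Laplace-mechanism proof. The main technical subtlety to be careful about is keeping track of the constants ($1.5$ vs.\ the more familiar $1.25$), which gives a small amount of slack in the tail bound and ensures the inequality goes through cleanly.
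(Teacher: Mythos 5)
Your proposal is correct and takes essentially the same approach as the paper: it defines the same typicality set $\cS^{q}$, uses the same oracle (add $\mathcal{N}(0,\sigma^2)$ to $\mu_q$), observes $\pr{\bbx\sim\dist}{\bbx\in\cS^{q}}\geq 1-\nu$, and then invokes the standard differentially-private Gaussian-mechanism analysis with $\alpha$ playing the role of the sensitivity. The paper states this last step by citation, whereas you spell out the privacy-loss computation; that is a welcome elaboration, not a deviation.
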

\begin{proof}
As in the proof of Theorem~\ref{thm:typ-stab-lap}, let $\cS^{q}\triangleq\left\{\bx\in\cX^n: \big\vert q(\bx)-\ex{\mathbf{T}\sim\dist}{q(\mathbf{T})}\big\vert \leq \alpha\right\}$ where $q$ is the input query from $\queryset_{\gamma_n}(\dist)$. Let $\alg$ be the oracle that takes the distribution $\dist$ as input, computes $\ex{\mathbf{T}\sim\dist}{q(\mathbf{T})}$, then adds $\mathcal{N}\left(0,\sigma^2\right)$ noise to it and outputs the result, where $\sigma$ is as given in Step~\ref{step:add-gauss} of Algorithm~\ref{alg:gauss}. By definition of $\queryset_{\gamma_n}$, we have $\pr{\bbx\sim\dist}{\bbx\in\cS^{q}}\geq 1-e^{-\gamma_n(\alpha)}\geq 1-\nu$.

Let $\bx \in \cS^{q}$. To reach the desired result, we then bound the ratio of the densities of $q(\bx)+N$ and $\ex{\mathbf{T}\sim\dist}{q(\mathbf{T})} + N$ where $N\sim \mathcal{N}\left(0,\sigma^2\right)$. This is done by following the same approach in the analysis of the standard Gaussian mechanism in the literature of differential privacy \cite{DKMMN06, dp-text, NTZ12} and using the fact that $\big\vert q(\bx) - \ex{\mathbf{T}\sim\dist}{q(\mathbf{T})}\big\vert \leq \alpha$.

\end{proof}

The following theorem follows directly from the tail properties of the Gaussian distribution.
\begin{theorem}[Empirical Error of Algorithm~\ref{alg:gauss}]\label{thm:emp-error-approx}
With probability at least $1-\beta$ with respect to the random coins of Algorithm~\ref{alg:gauss}, the output $w$ satisfies $\vert w-q(\bx)\vert< \frac{2\alpha\sqrt{\ln(1.5/\tau)\ln(1/\beta)}}{\eta}$.
\end{theorem}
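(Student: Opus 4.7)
The plan is a direct application of a standard Gaussian tail inequality. By construction of Algorithm~\ref{alg:gauss}, we have $w - q(\bx) = N$ where $N \sim \mathcal{N}(0, \sigma^2)$ with $\sigma = \alpha\sqrt{2\ln(1.5/\tau)}/\eta$ as set in Step~\ref{step:add-gauss}. Since the dataset $\bx$ is fixed and the query value $q(\bx)$ is deterministic, $|w - q(\bx)|$ is simply the absolute value of a zero-mean Gaussian with known standard deviation $\sigma$, and the probability in question is over the internal randomness used to draw $N$ alone.

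The execution is a one-liner. I would invoke the standard two-sided Gaussian concentration bound, namely that for $N \sim \mathcal{N}(0,\sigma^2)$ and any $t > 0$,
$$\Pr[|N| > t] \;\leq\; 2\exp\!\left(-\frac{t^2}{2\sigma^2}\right).$$
Setting the right-hand side equal to $\beta$ and solving for $t$ gives $t = \sigma\sqrt{2\ln(2/\beta)}$. Substituting the explicit value of $\sigma$ yields
$$t \;=\; \frac{\alpha\sqrt{2\ln(1.5/\tau)}}{\eta}\cdot\sqrt{2\ln(2/\beta)} \;=\; \frac{2\alpha\sqrt{\ln(1.5/\tau)\,\ln(2/\beta)}}{\eta},$$
which matches the claimed bound up to the cosmetic replacement of $\ln(2/\beta)$ by $\ln(1/\beta)$ (either one can absorb the other via mild reparameterization of $\beta$, or one can use the slightly sharper single-tail bound $\Pr[|N| > t] \leq \exp(-t^2/(2\sigma^2))$ valid beyond a constant multiple of $\sigma$, which recovers the stated constant exactly; for $\beta$ close to $1$ the bound is trivial anyway).

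There is no genuine obstacle here: the output is literally $q(\bx)$ plus a Gaussian of a prescribed variance, so the claim reduces entirely to a scalar Gaussian tail estimate. The only bookkeeping point is reconciling the precise numerical constant in the tail bound with the form stated in the theorem, which is the routine rescaling above.
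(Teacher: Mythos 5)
Your approach is the same as the paper's, which simply invokes Gaussian tail properties. One small tightening: the two-sided bound $\Pr[|N|>t]\le e^{-t^2/(2\sigma^2)}$ holds for \emph{all} $t\ge 0$ (write $1-\Phi(u)=e^{-u^2/2}\int_0^\infty \tfrac{1}{\sqrt{2\pi}}e^{-us-s^2/2}\,ds \le \tfrac12 e^{-u^2/2}$), so the stated constant $\ln(1/\beta)$ is recovered directly with no need for the ``beyond a constant multiple of $\sigma$'' caveat or the trivial-case fallback.
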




\section{Applications in Adaptive Data Analysis}

In this section, we discuss a direct implication of our results in previous sections on the statistical accuracy resulting from using a typically stable algorithm to answer any sequence of $k$ adaptively chosen $\gamma_n$-concentrated queries. 

In particular, we give a theorem (Theorem~\ref{thm:acc-adap} below) that describes an upper bound on the worst true error in the answers of Algorithm~\ref{alg:laplace} (of Section~\ref{sec:mechanisms}) to any sequence of $k$ adaptively chosen $\gamma_n$-concentrated queries from the class $\queryset_{\gamma_n}(\dist)$. Similar guarantees can be also obtained for Algorithm~\ref{alg:gauss}.  

First, we describe the adaptive model that we consider here. Figure~\ref{proc:adap} describes an adaptive interaction between an analyst and Algorithm~\ref{alg:laplace}. Let $\dist$ be a distribution over $\cX^n$. The procedure consists of $k$ iterations where in each iteration $j\in[k]$, the analyst chooses a query $q_j$ adaptively from $\queryset_{\gamma_n}(\dist)$ (based on all the previously submitted queries $q_1, \ldots, q_{j-1}$ and answers $w_1, \ldots, w_{j-1}$ received from the algorithm), the analyst submits $q_j$ to Algorithm~\ref{alg:laplace}, then the algorithm returns an answer $w_j$. 

\begin{figure}[ht!]
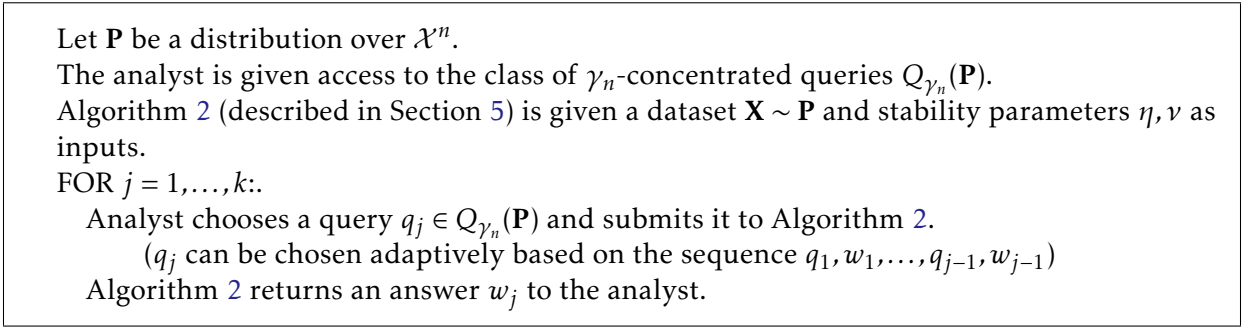

\begin{framed}
\begin{algorithmic}
\STATE{Let $\dist$ be a distribution over $\cX^n$.}
\STATE{The analyst is given access to the class of $\gamma_n$-concentrated queries $\queryset_{\gamma_n}(\dist)$.}
\STATE{Algorithm~\ref{alg:laplace} (described in Section~\ref{sec:mechanisms}) is given a dataset $\bbx\sim\dist$ and stability parameters $\eta, \nu$ as inputs.}
\STATE{FOR $j=1,\dots, k$:.}
\INDSTATE[1]{Analyst chooses a query $q_j \in \queryset_{\gamma_n}(\dist)$ and submits it to Algorithm~\ref{alg:laplace}.} 
\INDSTATE[3]{($q_j$ can be chosen adaptively based on the sequence $q_1, w_1, \ldots, q_{j-1}, w_{j-1}$)}
\INDSTATE[1]{Algorithm~\ref{alg:laplace} returns an answer $w_j$ to the analyst.}
\end{algorithmic}
\vspace{-2mm}
\end{framed}
\vspace{-6mm}
\caption{Adaptive interaction between an analyst and Algorithm~\ref{alg:laplace} \label{proc:adap}}
\end{figure}

The worst \emph{true} error is defined as the maximum statistcal error in all the answers of the algorithm. Given the adaptive procedure in Figure~\ref{proc:adap}, the worst true error is formally defined as $\max\limits_{j\in [k]}\big\vert \ex{\bby\sim\dist}{q_j(\bby)} - w_j\big\vert$.  

The following theorem characterizes an upper bound on such an error. This theorem is a direct consequence of the generalization guarantees in Section~\ref{subsec:gen-error} (Theorem~\ref{thm:gen-error}), the adaptive composition guarantees of typical stability in Section~\ref{sec:adap-comp} (Theorem~\ref{thm:composition-pure}), and the guarantees of Algorithm~\ref{alg:laplace} in Section~\ref{sec:mechanisms} (Theorems~\ref{thm:typ-stab-lap} and \ref{thm:emp-error-lap}). 

\begin{theorem}\label{thm:acc-adap}
Let $\dist$ be a distribution over $\cX^n$. Let $\alpha >0,~0 <\beta<1$. Let $k\in\mathbb{N}$. Consider the adaptive procedure for answering $k$ adaptively chosen queries from $\queryset_{\gamma_n}(\dist)$ as described in Figure~\ref{proc:adap}. Suppose Algorithm~\ref{alg:laplace} is run with input parameters $\eta=O\left(\frac{1}{\sqrt{k\ln(k/\beta)}}\right)$ and $\nu = O\left(e^{-\gamma_n\left(\alpha/\sqrt{k}\ln^{3/2}(k/\beta)\right)}\right)$. Then, we have
\begin{align}
\pr{}{\max\limits_{j\in [k]}\bigg\vert \ex{\bby\sim\dist}{q_j(\bby)} - w_j\bigg\vert\geq \alpha}&= O\left(k^{3/4}\ln^{1/4}(k/\beta)e^{-\Omega\left(\gamma_n\left(\alpha/\sqrt{k}\ln^{3/2}(k/\beta)\right)\right)}+\beta\right)\nonumber
\end{align}
where the probability is over the randomness in the choice of $q_1, \ldots, q_k$ (i.e., the possible randomness in the choice of the analyst) and over the randomness in $w_1, \ldots, w_k$ (i.e., randomness in Algorithm~\ref{alg:laplace} and in the choice of its input $\bbx\sim\dist$).
\end{theorem}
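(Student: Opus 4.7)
The plan is to split the true error as $\big\vert\ex{\bby\sim\dist}{q_j(\bby)}-w_j\big\vert \leq \big\vert\ex{\bby\sim\dist}{q_j(\bby)}-q_j(\bbx)\big\vert+\big\vert q_j(\bbx)-w_j\big\vert$, bound the ``generalization'' term and the ``empirical'' (noise) term uniformly in $j\in[k]$ each by $\alpha/2$ with high probability, and combine.

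For the empirical half, each $w_j$ equals $q_j(\bbx)$ plus independent $\mathsf{Lap}(\alpha_0/\eta)$ noise, where $\alpha_0$ is the concentration radius chosen internally by Algorithm~\ref{alg:laplace} so that $\gamma_n(\alpha_0)\geq\ln(1/\nu)$. By Theorem~\ref{thm:emp-error-lap} and a union bound over $j$, $\max_{j\in[k]}|q_j(\bbx)-w_j|<\alpha_0\ln(2k/\beta)/\eta$ except with probability $\beta/2$. Substituting $\eta=\Theta(1/\sqrt{k\ln(k/\beta)})$ gives $O(\alpha_0\sqrt k\ln^{3/2}(k/\beta))$, which is at most $\alpha/2$ provided $\alpha_0=O(\alpha/(\sqrt k\ln^{3/2}(k/\beta)))$; this is exactly what the prescribed $\nu=O(e^{-\gamma_n(\alpha/\sqrt k\ln^{3/2}(k/\beta))})$ buys through the defining relation $\gamma_n(\alpha_0)\geq\ln(1/\nu)$.

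For the generalization half, I would view the entire $k$-fold adaptive composition (Algorithm~\ref{alg:comp-mech} instantiated with $k$ calls to Algorithm~\ref{alg:laplace}, with the analyst folded in as the post-processing that picks each $q_j$ from previous outputs) as a single mechanism $\cM^k$ whose output is the whole transcript $(q_1,w_1,\ldots,q_k,w_k)$. Each individual call is $(\eta,0,\nu)$-typically stable by Theorem~\ref{thm:typ-stab-lap}; then Theorem~\ref{thm:composition-pure} applied with, say, $\tau'=\beta/(2k)$, combined with Lemma~\ref{lem:postpro}, yields that $\cM^k$ is $(\eta^*,\tau^*,\nu^*)$-typically stable with $\eta^*=\tilde O(\sqrt k\eta)=O(1)$ and $\tau^*=\nu^*=O\bigl(\sqrt{k\tau'/\eta}+\sqrt{k\nu/\eta}\,e^{\tilde O(\sqrt k\eta)}\bigr)$. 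Applying Lemma~\ref{max-inf-bound} (near-independence) to the event $\cO=\{(\bx,\text{transcript}):\exists j,\,|q_j(\bx)-\ex{\bby\sim\dist}{q_j(\bby)}|>\alpha_0\}$: on the ``decoupled'' side, the transcript is produced from a fresh $\bby'\sim\dist$ independent of $\bbx$, so each $q_j$ is independent of $\bbx$, and the $\gamma_n$-concentration of $q_j\in\queryset_{\gamma_n}(\dist)$ plus a union bound over $j$ bounds that probability by $k\nu$. Lemma~\ref{max-inf-bound} upgrades this to $\Pr[\text{any generalization failure}]\leq e^{\eta^*}k\nu+\tau^*+5\nu^*$. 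Plugging in the chosen parameters, the $e^{\eta^*}k\nu$ and $\sqrt{k\nu/\eta}\,e^{O(1)}$ pieces combine into the $k^{3/4}\ln^{1/4}(k/\beta)\,e^{-\Omega(\gamma_n(\alpha/\sqrt k\ln^{3/2}(k/\beta)))}$ factor, while the $\sqrt{k\tau'/\eta}$ piece with $\tau'=\beta/(2k)$ collapses into the additive $O(\beta)$ slack.

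The main obstacle is the simultaneous calibration of $(\eta,\tau',\nu)$: $\eta$ has to be small enough that $\eta^*=\tilde O(\sqrt k\eta)$ stays $O(1)$ (so that Lemma~\ref{max-inf-bound} and hence Theorem~\ref{thm:gen-error} apply), yet large enough that the Laplace scale $\alpha_0/\eta$ does not exceed $\alpha/(2\ln(2k/\beta))$; $\nu$ must simultaneously control the concentration radius $\alpha_0$ on the empirical side and the independent-sampling failure budget $k\nu$ on the generalization side; and $\tau'$ inside Theorem~\ref{thm:composition-pure} trades the Azuma-type tail against the $\sqrt{k\tau'/\eta}$ contribution to $\nu^*$. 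Once these three parameters are balanced as prescribed in the theorem statement, the remainder of the argument is a straightforward chain of the previously-established tail, composition, and near-independence bounds.
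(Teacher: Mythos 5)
Your decomposition — split the error into generalization plus empirical noise, bound the noise via Theorem~\ref{thm:emp-error-lap} and a union bound, bound the generalization half by composing Theorem~\ref{thm:typ-stab-lap} with Theorem~\ref{thm:composition-pure} and then invoking Lemma~\ref{max-inf-bound} on the event that any $q_j$ overfits — is exactly the chain of results the paper itself points to as "the" proof of Theorem~\ref{thm:acc-adap}. The way you fold the analyst's query-selection into post-processing of the composed transcript, and the calibration of the internal radius $\alpha_0$ via $\gamma_n(\alpha_0)\geq\ln(1/\nu)$, are also correct.

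There is, however, a concrete calibration error in your choice of $\tau'$. You take $\tau'=\beta/(2k)$ and claim the resulting contribution $\sqrt{k\tau'/\eta}$ to $\nu^*$ "collapses into the additive $O(\beta)$ slack." With $\eta=\Theta\bigl(1/\sqrt{k\ln(k/\beta)}\bigr)$, one instead gets
$$\sqrt{\frac{k\tau'}{\eta}}=\sqrt{\frac{k\cdot\beta/(2k)}{\eta}}=\Theta\!\left(\sqrt{\beta}\,\bigl(k\ln(k/\beta)\bigr)^{1/4}\right),$$
which is neither $O(\beta)$ nor absorbable into the $k^{3/4}\ln^{1/4}(k/\beta)\,e^{-\Omega(\gamma_n(\cdots))}$ term: for, say, $k=100$ and $\beta=0.01$ it is about $0.4$, two orders of magnitude larger than $\beta$. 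The repair is straightforward: choose $\tau'$ small enough that $\sqrt{k\tau'/\eta}=O(\beta)$, i.e. $\tau'=\Theta\bigl(\beta^2\eta/k\bigr)=\Theta\bigl(\beta^2/(k^{3/2}\sqrt{\ln(k/\beta)})\bigr)$. This still gives $\log(1/\tau')=O(\ln(k/\beta))$, so $\eta^*=\tilde{O}(\sqrt k\,\eta)$ remains $O(1)$ and the rest of your argument — in particular the $\sqrt{k\nu/\eta}\,e^{O(1)}=O\bigl(\sqrt{\nu}\,k^{3/4}\ln^{1/4}(k/\beta)\bigr)$ piece and the $e^{\eta^*}k\nu$ piece, both of which fall under the first term of the theorem's bound — goes through unchanged.
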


\section*{Acknowledgement}
We would like to thank Adam Smith and James Zhou for the insightful discussions. We also thank Adam Smith for pointing out the conditioning lemma of \cite{KS14} which we used in our proof for the adaptive composition guarantees of typical stability. 

\ifnum\short=1
\vfill\eject \small
\bibliographystyle{alpha}
\bibliography{references}
\else
\addcontentsline{toc}{section}{References}
\bibliographystyle{alpha}
\bibliography{references}
\fi

\begin{appendices}
\section{Proof of Theorem~\ref{thm:composition}}

Fix $\eta \in (0, 3/2]$, $\tau\in (0, \eta/50]$, $\nu \in (0, 1)$, and $\tau'\in (0, 1)$. For any $j\geq 1$, let 
\begin{align}
\eta_{j}&= 2\sqrt{2j\log(1/\tau')}\eta+j\left(2\eta\left(\frac{e^{2\eta}}{1-\hat{\tau}}-1\right)+\psi(\tau)\right),\label{eta_j}\\
\tau_{j}&= \nu_j= \left(j\frac{\hat{\tau}+\tau'}{2\eta}+\frac{\nu}{2\eta}+\sum_{t=1}^{j-1}e^{\eta_{t}}\frac{\nu}{2\eta}\right)^{1/2}.\label{tau_j}
\end{align}
where 
\begin{align}
&\hspace{4.5cm}\hat{\tau}\triangleq\frac{2\tau}{1-e^{-\eta}}=O\left(\tau/\eta\right),\label{def:hat-tau}\\
\psi(\tau)&\triangleq\tau \left(2e^{\eta}+1\right)+\tau^2\left(1+\frac{2e^{2\eta}}{(e^{\eta}-1)^2}\left(4e^{2\eta}+4e^{\eta}-3-2e^{-\eta}+e^{-2\eta}\right)\right)=O\left(\tau+\tau^2/\eta^2\right).\label{def:psi-tau}
\end{align}

Fix some $\dist\in\cP$. Let $\bbx\sim\dist$. As before, for any integer $j\geq 1$, we let $\bbz^j=(Z_1, \ldots, Z_j)$ denote the output of $\cM^j(\bbx)$, and let $\tilde{\bbz}^j$ to denote the output of $\alg^j(\dist)$. We assume that $\bbz^{0}=\tilde{\bbz}^{0}=\bot$ with probability 1.  

Analogous to the definitions in the case of pure typical stability, for any integer $i\geq 1$, and any $(\bx, \bz^i)\in\cX^n \times \cZ_1\times\ldots\times\cZ_i$, we define 
\begin{align}
f_i(\bx, \bz^i)&=\ln\left(\frac{\pr{}{\adv_i\left(\bx, \bz^{i-1}\right)=z_i}}{\pr{}{\alg_i\left(\dist\right)=z_i}}\right)\label{def:f_i2}
\end{align}
and 
\begin{align}
F_j(\bx, \bz^j)&=\sum_{i=1}^j f_i(\bx, \bz^i)\label{def:F_j2}
\end{align}

\noindent Unlike the case of pure typical stability, here we define different sequence of sets $\cC_i$:
\begin{align}
\cC_i&=\left\{(\bx, \bz^{i-1}):~ \adv_i(\bx, \bz^{i-1})\approx_{\eta, \tau} \alg_i\left(\dist\right)\right\}, \label{def:C_i2}
\end{align}

\noindent We also define the sequence of sets of input/output pairs for which $\vert f_i(\bx, \bz^i)\vert$ is bounded:
\begin{align}
\cE_i&=\left\{(\bx, \bz^i):~ \vert f_i(\bx, \bz^i)\vert \leq 2\eta\right\}\label{def:E_i}
\end{align}

\noindent Now, we define the sequences $\cH_i$ and $\hH_i$ as follows:
\begin{align}
\cH_i&=\left\{(\bx, \bz^{i}): ~\forall \ell\in[i] ~(\bx, \bz^{\ell-1})\in\cC_{\ell} \text{  and  } (\bx, \bz^{\ell})\in\cE_{\ell}\right\}\label{def:H_i2}\\
\hH_i&=\left\{(\bx, \bz^{i-1}): ~\forall \ell\in[i] ~(\bx, \bz^{\ell-1})\in\cC_{\ell} \text{  and  } (\bx, \bz^{\ell-1})\in\cE_{\ell-1}\right\}\label{def:hH_i2}
\end{align}
where $\cE_0\triangleq \cX^n\times \{\bot\}.$

\noindent Let $\eta_j$ be as defined in (\ref{eta_j}), and define the set of ``good inputs and outputs'' as 
\begin{align}
\cG_j&=\left\{(\bx, \bz^j):~F_j(\bx, \bz^j)\leq \eta_j\right\}\label{def:G_j2}
\end{align}

We start by a lemma that will play a similar role to that of Lemma~\ref{lem:basic} in the proof of Theorem~\ref{thm:composition-pure}. However, we note that both the statement and the proof of this lemma are different from those of Lemma~\ref{lem:basic}.
\begin{lem}\label{lem:basic2}
Let $j\geq 1$ and $\hat\tau$ be as defined in (\ref{def:hat-tau}). Suppose there exists a $\mu^*\geq j(\nu+\hat\tau)$ such that $\pr{}{(\bbx, \bbz^{j-1})\in\hH_j}\geq 1-\mu^*$. Then, 
$$\pr{}{(\bbx, \bbz^j)\notin\cG_j\text{  and  }(\bbx, \bbz^{j-1})\in\hH_j}\leq \tau'+\hat\tau,$$ and 
$$\pr{}{(\bbx, \bbz^j)\in\cG_j}\geq 1-\mu^*-\tau'-\hat\tau.$$
\end{lem}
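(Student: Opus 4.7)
The plan is to mimic the Azuma-based argument in the proof of Lemma~\ref{lem:basic}, but first replace the log-likelihood ratios $f_i$ by a truncated version that is almost-surely bounded despite $\tau>0$. The structural fact driving the truncation is that whenever $(\bx, \bz^{i-1}) \in \cC_i$, i.e.\ $\adv_i(\bx, \bz^{i-1}) \approx_{\eta,\tau} \alg_i(\dist)$, a standard argument about $(\eta,\tau)$-indistinguishable pairs shows that the set $\{z_i : |f_i(\bx, \bz^{i-1}, z_i)| > 2\eta\}$ has probability at most $\hat\tau = 2\tau/(1-e^{-\eta})$ under both $\adv_i(\bx, \bz^{i-1})$ and $\alg_i(\dist)$. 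This is precisely what the auxiliary event $\cE_i$ removes.

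Concretely, I would set $g_i(\bx, \bz^i) := f_i(\bx, \bz^i)$ when both $(\bx, \bz^{i-1}) \in \cC_i$ and $(\bx, \bz^i) \in \cE_i$, and $g_i(\bx, \bz^i) := 0$ otherwise, so that $|g_i| \leq 2\eta$ almost surely. For the conditional drift given any prefix $(\bx, \bz^{i-1})$: if $(\bx, \bz^{i-1}) \notin \cC_i$ the drift is zero, and otherwise the expectation of $g_i$ under $\adv_i(\bx, \bz^{i-1})$ equals the contribution to $\mathrm{KL}\bigl(\adv_i(\bx, \bz^{i-1}) \,\|\, \alg_i(\dist)\bigr)$ coming from the $\cE_i$-slice. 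A direct calculation analogous to the KL bound used in the proof of Lemma~\ref{lem:basic}, now restricted to the slice where the two densities differ by a factor of at most $e^{2\eta}/(1-\hat\tau)$ after renormalizing out the $\hat\tau$-mass of $\cE_i^c$, yields the per-step drift
\[ c \,:=\, 2\eta\!\left(\frac{e^{2\eta}}{1-\hat\tau}-1\right) + \psi(\tau), \]
which matches exactly the drift term in the definition of $\eta_j$ in~\eqref{eta_j}. Applying Azuma's inequality (Theorem~\ref{thm:azuma}) with $b = 2\eta$, drift $c$, and $u = \sqrt{2\log(1/\tau')}$ then yields
\begin{equation*}
\pr{}{\sum_{i=1}^{j} g_i(\bbx, \bbz^i) > \eta_j} \,\leq\, \tau'.
\end{equation*}

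To translate this bound on $\sum g_i$ into one on $F_j = \sum f_i$, I would decompose
\begin{align*}
\pr{}{(\bbx, \bbz^j) \notin \cG_j,\, (\bbx, \bbz^{j-1}) \in \hH_j}
&\leq \pr{}{F_j(\bbx, \bbz^j) > \eta_j,\, (\bbx, \bbz^{j-1}) \in \hH_j,\, (\bbx, \bbz^j) \in \cE_j} \\
&\qquad +\, \pr{}{(\bbx, \bbz^{j-1}) \in \hH_j,\, (\bbx, \bbz^j) \notin \cE_j}.
\end{align*}
On the event in the first summand, the definition of $\hH_j$ forces $(\bx, \bz^{l-1}) \in \cC_l$ and $(\bx, \bz^{l-1}) \in \cE_{l-1}$ for every $l \in [j]$; combined with $(\bx, \bz^j) \in \cE_j$ this yields $g_i \equiv f_i$ for all $i \in [j]$, so this summand is bounded by $\tau'$ via Azuma. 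For the second summand, $\hH_j$ forces $(\bbx, \bbz^{j-1}) \in \cC_j$, and the standard $(\eta,\tau)$-indistinguishability fact bounds the conditional probability that $(\bbx, \bbz^j) \notin \cE_j$ by $\hat\tau$; integrating over the prefix yields $\hat\tau$. Adding the two gives the first claimed inequality; the second follows immediately by bounding $\pr{}{(\bbx, \bbz^j) \notin \cG_j} \leq \pr{}{(\bbx, \bbz^j) \notin \cG_j,\, (\bbx, \bbz^{j-1}) \in \hH_j} + \mu^*$ via the hypothesis $\pr{}{(\bbx, \bbz^{j-1}) \notin \hH_j} \leq \mu^*$.

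The main obstacle is the careful derivation of the per-step drift $c$ for the truncated sequence: one must quantify how the renormalization after removing the $\hat\tau$-mass of $\cE_i^c$ from an $(\eta,\tau)$-indistinguishable pair inflates the pointwise density ratio from $e^\eta$ to (essentially) $e^{2\eta}/(1-\hat\tau)$ and introduces the additive $\psi(\tau)$ correction accounting for the residual contribution from the removed $\cE_i^c$-mass. Once that per-step drift computation is in hand, the combinatorial conditioning/bookkeeping proceeds exactly as in the pure case.
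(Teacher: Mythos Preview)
Your proposal is correct and follows essentially the same route as the paper: truncate the log-likelihood ratios to a bounded sequence, bound the per-step drift via a restricted-KL computation (the paper packages this as Claim~\ref{claim-a}), apply Azuma, and then decompose the target event using the $\hat\tau$ bound on the probability of landing outside $\cE_j$ (the paper's Claim~\ref{claim-b}). The only cosmetic difference is that the paper truncates using the cumulative event $\cH_i$ (all past $\cC_\ell, \cE_\ell$) rather than your per-step event $\cC_i \cap \cE_i$, but on the event $\{(\bbx,\bbz^{j-1})\in\hH_j\}$ the two truncations coincide, so the final decomposition and bounds are identical.
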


The proof of the above lemma relies mainly on the following claims. The proof of these claim follow ideas similar to those in \cite[Claim 3.5 and Lemma 3.7]{max-inf-dp-16}.

\begin{claim}\label{claim-a}
Let $\eta \in (0, 3/2]$ and $\tau\in (0, \eta/50]$. Let $(\bx, \bz^{i-1})\in \hH_i$. We have
$$\ex{}{f_i(\bx, \bz^i)\big\vert~\bbx=\bx,~ \bbz^{i-1}=\bz^{i-1},~(\bbx, \bbz^{i})\in\cH_i}\leq 2\eta\left(\frac{e^{2\eta}}{1-\hat{\tau}}-1\right)+\psi(\tau)$$
where $\hat\tau$ and $\psi(\tau)$ are as defined in (\ref{def:hat-tau}) and (\ref{def:psi-tau}), respectively.
\end{claim}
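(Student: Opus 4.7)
The plan is to interpret the stated conditional expectation as a truncated KL-like divergence and bound it by exploiting the approximate indistinguishability granted by $(\bx, \bz^{i-1}) \in \hH_i$. Writing $P$ for the distribution of $\adv_i(\bx, \bz^{i-1})$ and $Q$ for that of $\alg_i(\dist)$, the assumption $(\bx, \bz^{i-1}) \in \hH_i$ forces $(\bx, \bz^{i-1}) \in \cC_i$, so $P \approx_{\eta, \tau} Q$. Given the conditioning on the prefix, the event $(\bbx, \bbz^i) \in \cH_i$ reduces to the single requirement $Z_i \in E$ where $E := \{z : |\ln(P(z)/Q(z))| \leq 2\eta\}$, and the quantity to be bounded is
\[
D \;=\; \frac{1}{P(E)}\int_{E} P(z)\,\ln\!\left(\frac{P(z)}{Q(z)}\right)\,dz.
\]

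The first step I would carry out is to bound the mass of $E^c$ under both $P$ and $Q$. Split $E^c$ into $E^c_+ := \{P > e^{2\eta} Q\}$ and $E^c_- := \{Q > e^{2\eta} P\}$. On $E^c_+$, combining the pointwise inequality $Q < e^{-2\eta} P$ with $P(E^c_+) \leq e^{\eta} Q(E^c_+) + \tau$ yields $(1 - e^{-\eta}) P(E^c_+) \leq \tau$, so $P(E^c_+) \leq \hat{\tau}/2$. A symmetric argument controls $Q(E^c_-)$, and the pointwise inequalities $P(E^c_-) \leq e^{-2\eta} Q(E^c_-)$ and $Q(E^c_+) \leq e^{-2\eta} P(E^c_+)$ then upgrade these to $P(E), Q(E) \geq 1 - \hat{\tau}$.

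The second step is to decompose the numerator via the identity $\int_E P \ln(P/Q) = \int_E (P-Q)\ln(P/Q) + \int_E Q \ln(P/Q)$. The first integrand is non-negative because its two factors share a sign, and on $E$ the pointwise bound $(P-Q)\ln(P/Q) \leq 2\eta(e^{2\eta}-1) Q$ is immediate, so that piece contributes at most $2\eta(e^{2\eta}-1) Q(E) \leq 2\eta(e^{2\eta}-1)$, which forms the leading term. The second integral is handled using $\ln x \leq x-1$ to give $\int_E Q \ln(P/Q) \leq P(E) - Q(E)$, and the $(\eta, \tau)$-indistinguishability combined with step one controls this difference by an $O(\tau)$ quantity. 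Dividing by $P(E) \geq 1 - \hat{\tau}$ and distributing the $(1-\hat{\tau})^{-1}$ factor across all terms rearranges into the closed form $2\eta\big(\frac{e^{2\eta}}{1-\hat{\tau}} - 1\big) + \psi(\tau)$.

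The main obstacle is the bookkeeping required to land exactly on the stated expression for $\psi(\tau)$. The linear-in-$\tau$ contribution $\tau(2e^\eta + 1)$ is traceable to repeated applications of $(\eta,\tau)$-indistinguishability (e.g., converting $Q$-measures to $P$-measures when bounding $P(E) - Q(E)$, or propagating slack across sign-split pieces of $E$), while the $\tau^2$ term with prefactor $\frac{2 e^{2\eta}}{(e^\eta - 1)^2}$ arises from cross terms in which two $O(\tau/\eta) = O(\hat{\tau})$ corrections multiply, for instance when a $\hat{\tau}$-sized correction to the normalization is combined with a $\hat{\tau}$-sized correction to the integral itself. No new conceptual ingredient is needed beyond the elementary inequality $\ln(1+u) \leq u$ and the definition of $(\eta,\tau)$-indistinguishability, but the algebra is intricate and is exactly what the function $\psi(\tau)$ is designed to encapsulate.
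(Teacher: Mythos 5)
The paper does not spell out a proof of Claim~\ref{claim-a}; it only cites ``ideas similar to [Claim~3.5 and Lemma~3.7, max-inf-dp-16].'' Your setup is the right one: interpreting the conditional expectation as a truncated KL divergence $\frac{1}{P(E)}\int_E P\ln(P/Q)$ with $E=\{|\ln(P/Q)|\le 2\eta\}$ is correct (given $(\bx,\bz^{i-1})\in\hH_i$, the extra requirement for $(\bbx,\bbz^i)\in\cH_i$ is exactly $(\bx,\bz^i)\in\cE_i$), and your Step~1 bounds $P(E^c_+),Q(E^c_-)\le\hat\tau/2$ and hence $P(E),Q(E)\ge 1-\hat\tau$ are correct. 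The general route also matches what the cited reference does.

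The problem is Step~2, which you flag as unresolved bookkeeping: as written, it does not close. The best available bound from Step~1 is $\int_E Q\ln(P/Q)\le P(E)-Q(E)=Q(E^c)-P(E^c)\le Q(E^c_-)\le\hat\tau/2$ (\emph{not} $O(\tau)$, as you state at one point --- for small $\eta$ this is $\Theta(\tau/\eta)$). Combining your two pieces as $\big[2\eta(e^{2\eta}-1)-1\big]\tfrac{Q(E)}{P(E)}+1$ and using $\tfrac{Q(E)}{P(E)}\ge 1-\tfrac{\hat\tau/2}{1-\hat\tau}$ gives, for $2\eta(e^{2\eta}-1)<1$, the bound $2\eta(e^{2\eta}-1)+\big(1-2\eta(e^{2\eta}-1)\big)\tfrac{\hat\tau/2}{1-\hat\tau}$. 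The claimed bound's excess over $2\eta(e^{2\eta}-1)$ is $\tfrac{2\eta e^{2\eta}\hat\tau}{1-\hat\tau}+\psi(\tau)$. Taking $\eta=0.1$ and $\tau\to 0$ (so $\hat\tau\approx 21\tau$ and the $\tau^2$ part of $\psi$ is negligible), your excess is $\approx 0.956\cdot(\hat\tau/2)\approx 10.0\tau$ while the claim allows only $\approx 2\eta e^{2\eta}\hat\tau+(2e^{\eta}+1)\tau\approx 8.3\tau$; the same failure occurs throughout $\eta\lesssim 0.12$ and is worse as $\eta\to 0$ (e.g.\ $\eta=0.05$ gives $\approx 20\tau$ vs.\ $\approx 7.6\tau$). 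So the decomposition $\int_E(P-Q)\ln(P/Q)+\int_E Q\ln(P/Q)$ with the elementary bound $\ln x\le x-1$ on the second piece overshoots $\psi(\tau)$ for small $\eta$, and no amount of redistributing the $(1-\hat\tau)^{-1}$ normalization fixes it. A correct argument needs either a genuinely tighter control of $\int_E Q\ln(P/Q)$ (it is nonpositive on $E\cap\{P<Q\}$ --- simply dropping that part as you implicitly do is too lossy) or a different split that does not isolate a term of size $\Theta(\hat\tau)$.
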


\begin{claim}\label{claim-b}
Let $(\bx, \bz^{i-1})\in \hH_i$ and $\hat\tau$ be as defined in (\ref{def:hat-tau}). Then, we have

$$\pr{}{(\bbx, \bbz^i)\notin\cH_i\big\vert \bbx=\bx,~\bbz^{i-1}=\bz^{i-1}} < \hat\tau$$
\end{claim}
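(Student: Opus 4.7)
The plan is to first reduce the event $(\bbx, \bbz^i) \notin \cH_i$ (under the stated conditioning) to a simple threshold event on $|f_i|$, and then bound that event using the $(\eta, \tau)$-indistinguishability between $\adv_i(\bx, \bz^{i-1})$ and $\alg_i(\dist)$ that is guaranteed because $(\bx, \bz^{i-1}) \in \hH_i \subseteq \cC_i$.

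The first step is to unpack the sets. By definition of $\hH_i$, the hypothesis $(\bx, \bz^{i-1}) \in \hH_i$ already implies that $(\bx, \bz^{\ell-1}) \in \cC_\ell$ for every $\ell \in [i]$ and that $(\bx, \bz^{\ell}) \in \cE_{\ell}$ for every $\ell \in \{0, 1, \ldots, i-1\}$. So once we condition on $\bbx = \bx$ and $\bbz^{i-1} = \bz^{i-1}$, the only remaining requirement for $(\bbx, \bbz^i) \in \cH_i$ is that the new coordinate $Z_i$ satisfies $(\bx, \bz^i) \in \cE_i$, i.e.\ $|f_i(\bx, \bz^i)| \le 2\eta$. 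Therefore the claim reduces to showing
\[
\Pr\!\left[\,|f_i(\bx, (\bz^{i-1}, Z_i))| > 2\eta \,\right] \;<\; \hat\tau,
\]
where $Z_i \sim \adv_i(\bx, \bz^{i-1})$.

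The second step is a standard ``likelihood ratio'' argument. Let $P$ denote the distribution of $\adv_i(\bx, \bz^{i-1})$ and $Q$ the distribution of $\alg_i(\dist)$; since $(\bx,\bz^{i-1}) \in \cC_i$, we have $P \approx_{\eta, \tau} Q$. Partition the ``bad'' set into
\[
A \;=\; \{z : P(z) > e^{2\eta} Q(z)\}, \qquad B \;=\; \{z : Q(z) > e^{2\eta} P(z)\},
\]
so that $\{|f_i| > 2\eta\}$ is exactly $A \cup B$ (and $A \cap B = \emptyset$). On $A$, the definition gives $Q(A) \le e^{-2\eta} P(A)$, and indistinguishability gives $P(A) \le e^{\eta} Q(A) + \tau$; chaining these yields $P(A)\bigl(1 - e^{-\eta}\bigr) \le \tau$, hence $P(A) \le \tfrac{\tau}{1-e^{-\eta}}$. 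Symmetrically, on $B$ we use $P(B) < e^{-2\eta} Q(B)$ together with $Q(B) \le e^{\eta} P(B) + \tau$ to deduce $P(B) < \tfrac{\tau}{1-e^{-\eta}}$ (in fact with an extra $e^{-2\eta}$ factor, giving the strict inequality). Adding the two bounds gives $P(A \cup B) < \tfrac{2\tau}{1 - e^{-\eta}} = \hat\tau$, which is exactly the desired inequality.

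I do not expect any real obstacle here: the work is essentially bookkeeping to verify that the definitions of $\hH_i$, $\cH_i$, and $\cE_i$ line up so that the conditioning event collapses to a single-coordinate tail event on $f_i$. The only mildly delicate point is to make sure we use the ``two-sided'' pointwise deviation argument (both $A$ and $B$) rather than only the one-sided bound, since $f_i$ can be large in either sign; this is what produces the factor of $2$ in $\hat\tau = 2\tau/(1-e^{-\eta})$.
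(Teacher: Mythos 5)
Your proposal is correct, and it fills in exactly the argument the paper leaves implicit (the paper only points to Claim 3.5 and Lemma 3.7 of the cited max-information paper rather than spelling out a proof). The reduction to a one-coordinate event is right: $\hH_i$ packages all the $\cC_\ell$ constraints for $\ell\in[i]$ and all the $\cE_\ell$ constraints for $\ell\le i-1$, and $\cH_i$ adds only $\cE_i$, so conditioned on $\bbx=\bx$, $\bbz^{i-1}=\bz^{i-1}$ with $(\bx,\bz^{i-1})\in\hH_i$, the event $(\bbx,\bbz^i)\notin\cH_i$ is precisely $|f_i(\bx,(\bz^{i-1},Z_i))|>2\eta$ with $Z_i\sim\adv_i(\bx,\bz^{i-1})$. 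Your two-sided likelihood-ratio argument on $A$ and $B$, using $(\eta,\tau)$-indistinguishability granted by $(\bx,\bz^{i-1})\in\cC_i$, is the standard derivation and correctly produces $P(A)\le \tau/(1-e^{-\eta})$ and $P(B)\le e^{-2\eta}\tau/(1-e^{-\eta})<\tau/(1-e^{-\eta})$, so the sum is strictly below $\hat\tau=2\tau/(1-e^{-\eta})$ as required (using $\eta>0$).
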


\noindent\textbf{Proof of Lemma~\ref{lem:basic2}}\\
Define 
\begin{align}
g_i(\bx, \bz^{i})&=\left\{ \begin{array}{ll}
         f_i(\bx, \bz^i) & \mbox{for $(\bx, \bz^{i})\in\cH_i$};\\
        0 & \mbox{otherwise}.\end{array} \right.\nonumber 
\end{align}
First, by definition, we have
\begin{align}
\pr{}{\big\vert g_i(\bx, \bz^{i})\big\vert \leq 2\eta}&=1\label{eq:prob1}
\end{align}
Next, we claim that for all $(\bx, \bz^{i-1})\in \cX^n\times \cZ_1\times\ldots\times\cZ_{i-1}$
\begin{align}
\ex{}{g_i(\bx, \bz^{i})\big\vert ~\bbx=\bx,~ \bbz^{i-1}=\bz^{i-1}}&\leq 2\eta\left(\frac{e^{2\eta}}{1-\hat{\tau}}-1\right)+\psi(\tau) \label{eq:expec}
\end{align}
To see this, first, observe that for any $(\bx, \bz^{i-1})\in \hH_i,$ we have 
\begin{align}
\ex{}{g_i(\bx, \bz^{i})\big\vert ~\bbx=\bx,~ \bbz^{i-1}=\bz^{i-1}, (\bbx, \bbz^{i})\notin\cH_i}&=0.\label{bound-expec1}
\end{align}
Moreover, for any $(\bx, \bz^{i-1})\in \hH_i,$ by using Claim~\ref{claim-a}, we get
\begin{align}
\ex{}{g_i(\bx, \bz^{i})\big\vert ~\bbx=\bx,~ \bbz^{i-1}=\bz^{i-1},~(\bbx, \bbz^{i})\in\cH_i}&=\ex{}{f_i(\bx, \bz^i)\big\vert~\bbx=\bx,~ \bbz^{i-1}=\bz^{i-1},~(\bbx, \bbz^{i})\in\cH_i}\nonumber\\
&\leq 2\eta\left(\frac{e^{2\eta}}{1-\hat{\tau}}-1\right)+\psi(\tau)\label{bound-expec2}
\end{align}
For any $(\bx, \bz^{i-1})\notin \hH_i$, we have $(\bbx, \bbz^{i})\notin\cH_i$ with probability $1$, and hence 
\begin{align}
\ex{}{g_i(\bx, \bz^{i})\big\vert ~\bbx=\bx,~ \bbz^{i-1}=\bz^{i-1}}=0\label{bound-expec3}
\end{align}
Thus, from (\ref{bound-expec1}), (\ref{bound-expec2}), and (\ref{bound-expec3}), we have shown that (\ref{eq:expec}) is true for all $(\bx, \bz^{i-1})\in \cX^n\times \cZ_1\times\ldots\times\cZ_{i-1}$. 

Now, given (\ref{eq:prob1}) and (\ref{eq:expec}), we can apply Azuma's inequality (Theorem~\ref{thm:azuma}) to the sequence of random variables $g_i(\bx, \bz^{i}), ~i\in [j]$. Let $\eta_j$ be as defined in (\ref{eta_j}). Observe that 
\begin{align}
\pr{}{(\bbx, \bbz^j)\notin\cG_j\text{  and  }(\bbx, \bbz^{j})\in\cH_j}&= \pr{}{\sum_{i=1}^j f_i(\bbx, \bbz^j)>\eta_j\text{  and  } (\bbx, \bbz^{j})\in\cH_j}\nonumber\\
&=\pr{}{\sum_{i=1}^j g_i(\bbx, \bbz^j)>\eta_j} \leq\tau'\label{apply-azuma}
\end{align}
where the last inequality follows from Azuma's inequality (Theorem~\ref{thm:azuma}) and the definition of $\eta_j$ in (\ref{eta_j}). Now, we have 
 \begin{align}
&\pr{}{(\bbx, \bbz^j)\notin\cG_j\text{  and  }(\bbx, \bbz^{j-1})\in\hH_j}\nonumber\\
&= \pr{}{(\bbx, \bbz^j)\notin\cG_j\text{  and  }(\bbx, \bbz^{j-1})\in\hH_j \text{  and  }(\bbx, \bbz^{j})\in\cH_j}+ \pr{}{(\bbx, \bbz^j)\notin\cG_j\text{  and  }(\bbx, \bbz^{j-1})\in\hH_j \text{  and  }(\bbx, \bbz^{j})\notin\cH_j}\nonumber\\
&\leq\pr{}{(\bbx, \bbz^j)\notin\cG_j\text{  and  }(\bbx, \bbz^{j})\in\cH_j} + \pr{}{(\bbx, \bbz^{j-1})\in\hH_j \text{  and  }(\bbx, \bbz^{j})\notin\cH_j}\nonumber\\
&\leq \pr{}{(\bbx, \bbz^j)\notin\cG_j\text{  and  }(\bbx, \bbz^{j})\in\cH_j} + \pr{}{(\bbx, \bbz^{j})\notin\cH_j~\big\vert (\bbx, \bbz^{j-1})\in\hH_j}\nonumber\\
&\leq \tau'+\hat\tau\nonumber
\end{align}
where the last inequality follows from (\ref{apply-azuma}) and Claim~\ref{claim-b}. This together with the fact that $\pr{}{(\bbx, \bbz^{j-1})\in\hH_j}\geq 1-\mu^*$ (the premise of Lemma~\ref{lem:basic2}) gives 
$$\pr{}{(\bbx, \bbz^j)\in\cG_j}\geq 1-\mu^*-\tau'-\hat\tau.$$ Hence, the proof of Lemma~\ref{lem:basic2} is complete.

Next, we state and prove an analog of Lemma~\ref{lem:typ-stable}. 

\begin{lem}\label{lem:typ-stable2}
Let $j\geq 1$. Suppose the premise of Lemma~\ref{lem:basic2} is true, that is, there is $\mu^*\geq j(\nu+\hat\tau)$ such that $\pr{}{(\bbx, \bbz^{j-1})\in\hH_j}\geq 1-\mu^*$. Then $\cM^j$ is $(3\eta_j,~ 5\sqrt{\frac{\mu^*+\tau'+\hat\tau}{\eta}},~ 5\sqrt{\frac{\mu^*+\tau'+\hat\tau}{\eta}})$-typically stable.
\end{lem}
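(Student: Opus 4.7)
The proof mirrors Lemma~\ref{lem:typ-stable} almost step for step, with $\hC_j$ replaced by $\hH_j$, Lemma~\ref{lem:basic} replaced by its approximate counterpart Lemma~\ref{lem:basic2}, and an extra additive $\hat\tau$ propagated throughout. The target intermediate statement is the joint closeness
$$(\bbx, \cM^j(\bbx)) \approx_{\eta_j,\, \mu^* + \tau' + \hat\tau} (\bbx, \alg^j(\dist)),$$
from which the conditioning lemma (Lemma~\ref{lem:condition}) yields typical stability of $\cM^j$ in exactly the same way as in the pure case.

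To establish this joint closeness, first note that $F_j(\bx, \bz^j) = \ln\!\bigl(\pr{}{\cM^j(\bx)=\bz^j}/\pr{}{\alg^j(\dist)=\bz^j}\bigr)$, so on $\cG_j$ we have $\pr{}{\cM^j(\bx)=\bz^j} \leq e^{\eta_j}\pr{}{\alg^j(\dist)=\bz^j}$ directly, and the reverse inequality holds on $\tilde\cG_j \triangleq \{(\bx, \bz^j) : -F_j(\bx, \bz^j) \leq \eta_j\}$. The measure bound $\pr{}{(\bbx, \bbz^j) \in \cG_j} \geq 1 - \mu^* - \tau' - \hat\tau$ is exactly the second conclusion of Lemma~\ref{lem:basic2} under the given premise. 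For the symmetric bound on $\tilde\cG_j$, I would apply Lemma~\ref{lem:basic2} with the roles of $\cM^j(\bbx)$ and $\alg^j(\dist)$ swapped: the swapped premise is verified by using that $\bbx$ and $\tilde{\bbz}^{j-1}$ are independent, so a union bound over $\ell \in [j]$ combined with the typical stability of each $\adv_\ell$ (contributing at most $\nu$ per step) and the symmetric version of Claim~\ref{claim-b} (contributing at most $\hat\tau$ per step) gives $\pr{}{(\bbx, \tilde{\bbz}^{j-1}) \in \hH_j} \geq 1 - j(\nu + \hat\tau) \geq 1 - \mu^*$. Lemma~\ref{lem:basic2} then delivers $\pr{}{(\bbx, \tilde{\bbz}^j) \in \tilde\cG_j} \geq 1 - \mu^* - \tau' - \hat\tau$. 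Combining the two directional bounds via a standard union-type manipulation produces the joint closeness above.

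To finish, invoke Lemma~\ref{lem:condition} with $(U, V) = (\cM^j(\bbx), \bbx)$, $(U', V') = (\alg^j(\dist), \bbx)$, $\epsilon = \eta_j$, $\delta = \mu^* + \tau' + \hat\tau$, and $\hat\delta = 2\sqrt{(\mu^* + \tau' + \hat\tau)\eta/5}$. Using the inequality $1 - e^{-\eta_j} \geq 1 - e^{-\eta} \geq \tfrac{2}{5}\min(\eta, 1)$, the two terms $\tfrac{2\delta}{\hat\delta}$ and $\tfrac{2\delta}{1 - e^{-\epsilon}}$ combine to at most $5\sqrt{(\mu^* + \tau' + \hat\tau)/\eta}$, which is precisely the claimed $\tau^*=\nu^*$ parameter, with stability parameter $3\eta_j$.

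The main obstacle is the symmetric use of Claim~\ref{claim-b} in the swapped direction. Claim~\ref{claim-b} is stated for $\bbz^{\ell-1}$ drawn by $\cM^{\ell-1}(\bbx)$, whereas the union-bound step requires the analogous statement when $\tilde{\bbz}^{\ell-1}$ is drawn from $\alg^{\ell-1}(\dist)$. This is not conceptually deep, since the event $\cE_{\ell-1}$ depends only on $|f_{\ell-1}|$ (which is symmetric in the two distributions) and a standard privacy-style computation shows that the $(\eta, \tau)$-indistinguishability assumed on $\cC_\ell$ forces the set where $|f_{\ell-1}| > 2\eta$ to have mass at most $\hat\tau$ under either distribution. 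Still, this requires a small parallel argument that does not appear in the pure-case proof, and is the only place one has to be careful beyond routine bookkeeping.
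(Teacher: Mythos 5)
Your proposal is correct and follows essentially the same route as the paper: establish the joint $(\eta_j,\,\mu^*+\tau'+\hat\tau)$-indistinguishability of $(\bbx,\cM^j(\bbx))$ and $(\bbx,\alg^j(\dist))$ via the $\cG_j/\tilde\cG_j$ ratio bounds and a swapped application of Lemma~\ref{lem:basic2}, then invoke the conditioning lemma with the same choice of $\hat\delta$. The symmetry issue you flag regarding Claim~\ref{claim-b} is real (the paper invokes it silently for the $\alg^j$ side), but your observation that $\cE_{\ell-1}$ is defined via $|f_{\ell-1}|$ and that the $\hat\tau$ bound follows symmetrically from $(\eta,\tau)$-indistinguishability is exactly what makes the step sound, so no gap remains.
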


\begin{proof}
Recall the definition of $\cG_j$ in (\ref{def:G_j2}). Observe that for any $(\bx, \bz^{j})\in\cG_j$
\begin{align}
\frac{\pr{}{\cM^j(\bx)=\bz^j}}{\pr{}{\alg^j(\dist)=\bz^j}}&\leq e^{\eta_j}\label{direct12}
\end{align}
where $\alg^j(\dist)=\left(\alg_1(\dist),\ldots, \alg_j(\dist)\right)$. Let $\tilde{\bbz}^j$ denote the output of $\alg^j(\dist)$. Define
$$\tilde\cG_j=\left\{(\bx, \bz^j):~-F_j(\bx, \bz^j)\leq \eta_j\right\}$$
where $F_j(\bx, \bz^j)$ is as defined in (\ref{def:F_j2}). Now, for any $(\bx, \bz^{j})\in\tilde\cG_j$, we also have
\begin{align}
\frac{\pr{}{\alg^j(\dist)=\bz^j}}{\pr{}{\cM^j(\bx)=\bz^j}}&\leq e^{\eta_j}\label{direct22}
\end{align}
Moreover, by the independence of $\bbx$ and $\tilde{\bbz}^j$, using Claim~\ref{claim-b} and the union bound, we get
$$\pr{}{(\bbx,\tilde{\bbz}^{j-1})\in\hH_j}\geq 1-j(\nu+\hat\tau)\geq 1-\mu^*$$
where $\hH_j$ is as defined in (\ref{def:hH_i2}). 

Thus, by swaping the roles of $\cM^j(\bbx)$ and $\alg^j(\dist)$ in Lemma~\ref{lem:basic2}, it follows that 
\begin{align}
\pr{}{(\bbx, \tilde{\bbz}^{j})\in\tilde\cG_j}\geq 1-\mu^*-\tau'-\hat\tau\label{same-lemma2}
\end{align}
Hence, by Lemma~\ref{lem:basic2}, and using (\ref{direct12})-(\ref{same-lemma2}) above, we have 
\begin{align}
\left(\bbx,~\cM^j(\bbx)\right)&\approx_{\eta_j,~ \mu^*+\tau'+\hat\tau}\left(\bbx,~\alg^j(\dist)\right)\nonumber
\end{align}

The rest of the proof follows by applying Lemma~\ref{lem:condition} in the same way as in the proof of Lemma~\ref{lem:typ-stable}.

\end{proof}

The final component of the proof is the following lemma that is analogous to Lemma~\ref{lem:ind-step} and serves as our induction step. 

\begin{lem}\label{lem:ind-step2}
Let $j\geq 1$. Suppose the premise of Lemma~\ref{lem:basic2} is true, i.e., there exists $\mu^*\geq j(\nu+\hat\tau)$ such that $\pr{}{(\bbx, \bbz^{j-1})\in\hH_j}\geq 1-\mu^*$. Then, we have 
$$\pr{}{(\bbx, \bbz^j)\in \hH_{j+1}}\geq 1- \mu^* - e^{\eta_j}\nu -\tau'-\hat\tau$$
\end{lem}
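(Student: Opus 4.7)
The plan is to follow the structure of Lemma~\ref{lem:ind-step} from the pure case, with extra care for the fact that the definition of $\hH_{j+1}$ now also enforces, via the sets $\cE_{\ell-1}$, the magnitude condition $|f_j(\bx, \bz^j)| \leq 2\eta$ at the new coordinate. The first step is to observe that $(\bx, \bz^j) \in \hH_{j+1}$ is equivalent to the conjunction of $(\bx, \bz^{j-1}) \in \hH_j$, $(\bx, \bz^j) \in \cC_{j+1}$, and $(\bx, \bz^j) \in \cE_j$. A union bound therefore gives
\[
\pr{}{(\bbx, \bbz^j) \notin \hH_{j+1}} \leq \pr{}{(\bbx, \bbz^{j-1}) \notin \hH_j} + \pr{}{(\bbx, \bbz^{j-1}) \in \hH_j,\ (\bbx, \bbz^j) \notin \cC_{j+1}} + \pr{}{(\bbx, \bbz^{j-1}) \in \hH_j,\ (\bbx, \bbz^j) \notin \cE_j},
\]
and it suffices to bound each term separately. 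The first is at most $\mu^*$ by the premise. For the third, I would invoke Claim~\ref{claim-b} at index $i=j$: for every $(\bx, \bz^{j-1}) \in \hH_j$ the conditional probability that $(\bbx, \bbz^j) \notin \cE_j$ is below $\hat\tau$, so marginalizing over $(\bbx, \bbz^{j-1}) \in \hH_j$ contributes at most $\hat\tau$.

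The main work is bounding the middle term, and I would split it according to the good event $\cG_j$ exactly as in the pure case:
\[
\pr{}{(\bbx, \bbz^{j-1}) \in \hH_j,\ (\bbx, \bbz^j) \notin \cC_{j+1}} \leq \pr{}{(\bbx, \bbz^{j-1}) \in \hH_j,\ (\bbx, \bbz^j) \notin \cG_j} + \pr{}{(\bbx, \bbz^j) \in \cG_j,\ (\bbx, \bbz^j) \notin \cC_{j+1}}.
\]
The first of these two pieces is exactly the event controlled by Lemma~\ref{lem:basic2} and is therefore at most $\tau' + \hat\tau$. For the second piece I would use that on $\cG_j$ one has the density-ratio bound $\pr{}{\cM^j(\bx) = \bz^j} \leq e^{\eta_j}\, \pr{}{\alg^j(\dist) = \bz^j}$; summing/integrating this inequality against the marginal of $\bbx$ (mimicking the end of the proof of Lemma~\ref{lem:ind-step}) bounds the piece by $e^{\eta_j}\, \pr{}{(\bbx, \tilde{\bbz}^j) \notin \cC_{j+1}}$. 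Finally, since $\bbx$ and $\tilde{\bbz}^j$ are independent and $\adv_{j+1}(\cdot, \bz^j)$ is $(\eta, \tau, \nu)$-typically stable for every fixed prefix $\bz^j$, a fresh-sample argument yields $\pr{}{(\bbx, \tilde{\bbz}^j) \notin \cC_{j+1}} \leq \nu$, giving $e^{\eta_j}\nu$ for this piece.

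Summing the three contributions gives $\pr{}{(\bbx, \bbz^j) \notin \hH_{j+1}} \leq \mu^* + e^{\eta_j}\nu + \tau' + \hat\tau$ (absorbing the overlap among $\hat\tau$ terms), which rearranges to the claimed lower bound on $\pr{}{(\bbx, \bbz^j) \in \hH_{j+1}}$. The only genuine obstacle is to keep straight the roles of the several ``bad'' sets that are new in the approximate case: $\hH_j^c$ tracks accumulated failures of indistinguishability and magnitude from prior rounds, $\cG_j^c$ is the Azuma concentration failure on the partial sum $F_j$, $\cC_{j+1}^c$ is the indistinguishability failure at the new round, and $\cE_j^c$ is the magnitude failure of $f_j$ at the new round. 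Once this bookkeeping is in order, the core algebraic move is the density-ratio bound on $\cG_j$ combined with the independence of $\bbx$ and $\tilde{\bbz}^j$, and the rest is a careful union bound.
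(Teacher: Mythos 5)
Your decomposition and the key density-ratio/independence argument are essentially the same as the paper's, but there is a genuine accounting gap that your remark ``(absorbing the overlap among $\hat\tau$ terms)'' glosses over rather than resolves.

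Concretely: the three events you union over, namely $A = \{(\bbx,\bbz^{j-1})\notin\hH_j\}$, $B = \{(\bbx,\bbz^{j-1})\in\hH_j,\ (\bbx,\bbz^j)\notin\cC_{j+1}\}$, and $C = \{(\bbx,\bbz^{j-1})\in\hH_j,\ (\bbx,\bbz^j)\notin\cE_j\}$, are \emph{not} disjoint ($B \cap C$ has the cases where both the indistinguishability and the magnitude checks fail at round $j{+}1$). With your bounds, $\pr{}{A}\le\mu^*$, $\pr{}{B}\le\tau'+\hat\tau+e^{\eta_j}\nu$ (since invoking Lemma~\ref{lem:basic2} for the Azuma-failure piece already gives $\tau'+\hat\tau$, not $\tau'$), and $\pr{}{C}\le\hat\tau$, a union bound yields $\mu^*+\tau'+2\hat\tau+e^{\eta_j}\nu$ --- an extra $\hat\tau$ compared to the claimed $\mu^*+\tau'+\hat\tau+e^{\eta_j}\nu$. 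A union bound can only lose, so overlap cannot be ``absorbed'' from that side.

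The paper closes exactly this gap by making the decomposition disjoint before bounding. It writes $\hH_{j+1}^c \cap \hH_j$ as the disjoint union of $\cC_{j+1}^c \cap \cH_j$ (using $\cH_j = \hH_j \cap \cE_j$, so the magnitude check at round $j$ is already enforced) and $\cE_j^c \cap \hH_j$; the first requires $\cE_j$, the second requires $\cE_j^c$. This has two payoffs: (i) there is no double-counting of $\cE_j^c$, and (ii) when splitting $\cC_{j+1}^c \cap \cH_j$ on $\cG_j$ one can bound the failure term by the intermediate inequality~(\ref{apply-azuma}), $\pr{}{\cH_j \cap \cG_j^c}\le\tau'$, rather than by the weaker conclusion of Lemma~\ref{lem:basic2}, $\pr{}{\hH_j \cap \cG_j^c}\le\tau'+\hat\tau$. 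Both of your $\hat\tau$ over-counts come from the same source, so the fix is to replace $B$ by $B\setminus C = \cH_j \cap \cC_{j+1}^c$ before bounding, which is precisely what the paper does. The difference is a constant-factor loss in the $\hat\tau$ coefficient per induction step (so it would propagate into a constant-factor change to $\tau_j$), but to obtain the lemma statement verbatim you need the disjoint decomposition.
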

\begin{proof}
From the definitions of the sequence of sets $\hH_j, \cC_j,$ and $\cE_j$ in (\ref{def:hH_i2}), (\ref{def:C_i2}), and (\ref{def:E_i}), respecitvely, observe that 
\begin{align}
&\pr{}{(\bbx, \bbz^j)\notin \hH_{j+1}}\nonumber\\
=&\pr{}{(\bbx, \bbz^j)\notin \hH_{j+1}\text{  and  } (\bbx, \bbz^{j-1})\in\hH_{j}}+\pr{}{(\bbx, \bbz^{j-1})\notin\hH_{j}}\nonumber\\
=&\pr{}{(\bbx, \bbz^{j})\notin\cC_{j+1} \text{ and }(\bbx, \bbz^{j})\in \cH_j}+\pr{}{(\bbx, \bbz^j)\notin\cE_j\text{ and }(\bbx, \bbz^{j-1})\in\hH_j}+\pr{}{(\bbx, \bbz^{j-1})\notin\hH_{j}}\nonumber\\
=&\pr{}{(\bbx, \bbz^{j})\notin\cC_{j+1} \text{ and }(\bbx, \bbz^{j})\in \cH_j}+\pr{}{(\bbx, \bbz^j)\notin\cH_j\text{ and }(\bbx, \bbz^{j-1})\in\hH_j}+\pr{}{(\bbx, \bbz^{j-1})\notin\hH_{j}}\nonumber\\
\leq& \pr{}{(\bbx, \bbz^{j})\notin\cC_{j+1} \text{ and }(\bbx, \bbz^{j})\in \cH_j}+\pr{}{(\bbx, \bbz^j)\notin\cH_j\text~\big\vert~(\bbx, \bbz^{j-1})\in\hH_j}+\pr{}{(\bbx, \bbz^{j-1})\notin\hH_{j}}\nonumber\\
\leq &\pr{}{(\bbx, \bbz^{j})\notin\cC_{j+1} \text{ and }(\bbx, \bbz^{j})\in \cH_j}+\hat\tau+\mu^*\label{ineq-last1}
\end{align}
where the last inequality follows from Claim~\ref{claim-b} and from the premise in the lemma. 

Now, let's consider the first term on the right-hand side of the last inequality. Observe that
\begin{align}
&\pr{}{(\bbx, \bbz^{j})\notin\cC_{j+1} \text{ and }(\bbx, \bbz^{j})\in \cH_j}\nonumber\\
=& \pr{}{(\bbx, \bbz^{j})\notin\cC_{j+1} \text{ and }(\bbx, \bbz^{j})\in \cH_j \text{  and  }(\bbx, \bbz^j)\in\cG_j}+ \pr{}{(\bbx, \bbz^{j})\notin\cC_{j+1} \text{ and }(\bbx, \bbz^{j})\in \cH_j \text{  and  }(\bbx, \bbz^j)\notin\cG_j}\nonumber\\
\leq& \pr{}{(\bbx, \bbz^{j})\notin\cC_{j+1}\text{  and  }(\bbx, \bbz^j)\in\cG_j}+ \pr{}{(\bbx, \bbz^{j})\in \cH_j \text{  and  }(\bbx, \bbz^j)\notin\cG_j}\nonumber\\
\leq& \pr{}{(\bbx, \bbz^{j})\notin\cC_{j+1} \text{  and  }(\bbx, \bbz^j)\in\cG_j}+ \tau'\label{ineq-last2}
\end{align}
where the last inequality follows from (\ref{apply-azuma}). Now, consider the remaining term. Let $\cC^c_{j+1}$ denote the complement of the set $\cC_{j+1}$, and let $\tilde{\bbz}^{j}$ denote the output of $\alg^j(\dist)$. Observe that 
\begin{align}
\pr{}{(\bbx, \bbz^{j})\notin\cC_{j+1}\text{  and  }(\bbx, \bbz^j)\in\cG_j}&=\sum_{(\bx, \bz^j)\in \cC^c_{j+1}\cap\cG_j}\pr{}{\bbx=\bx, \bbz^j=\bz^j}\nonumber\\
&=\sum_{(\bx, \bz^j)\in \cC^c_{j+1}\cap\cG_j}\pr{}{\bbz^j=\bz^j~\vert~\bbx=\bx}\pr{}{\bbx=\bx}\nonumber\\
&=\sum_{(\bx, \bz^j)\in \cC^c_{j+1}\cap\cG_j}\pr{}{\cM^j(\bx)=\bz^j}\pr{}{\bbx=\bx}\nonumber\\
&\leq e^{\eta_j}\sum_{(\bx, \bz^j)\in \cC^c_{j+1}\cap\cG_j}\pr{}{\alg^j(\dist)=\bz^j}\pr{}{\bbx=\bx}\nonumber\\
&\leq e^{\eta_j}\pr{}{(\bbx, \tilde{\bbz^j})\notin \cC_{j+1}}\nonumber
\end{align}
where the fourth inquality follows from the definition of $\cG_j$. 

Notice that $\bbx$ and $\tilde{\bbz}^j$ are independent. By the $(\eta, \tau, \nu)$-typical stability of $\adv_{j+1}$, for any fixed prefix $\bz^{j}$ we have
$$\pr{}{\adv_{j+1}(\bbx, \bz^j)\approx_{\eta, \tau}\alg_{j+1}(\dist)}\geq 1-\nu$$
Thus, by the independence of $\bbx$ and $\tilde{\bbz}^j$, we have
\begin{align}
\pr{}{(\bbx, \tilde{\bbz^j})\notin \cC_{j+1}}&\leq \nu.\label{ineq-last3} 
\end{align}

Putting (\ref{ineq-last1}), (\ref{ineq-last2}), and (\ref{ineq-last3}) together concludes the proof.
\end{proof}

The proof of Theorem~\ref{thm:composition} now follows from the Lemmas \ref{lem:basic2}, \ref{lem:typ-stable2}, and \ref{lem:ind-step2}, and via induction on the basis of $j=1$. Note that $\pr{}{(\bbx, \bbz^{0})\in\hH_1}=\pr{}{(\bbx, \bot)\in\cC_1}\geq 1-\nu$ by the typical stability of $\adv_1$, and hence the premise in Lemma~\ref{lem:basic2} is true for $j=1$ (the base case of the induction). 

\end{appendices}
\end{document}